\definecolor{cold}{HTML}{008acd}
\def\frozen{\raisebox{-.15ex}{\color{cold}\SnowflakeChevron}}
\definecolor{vlightgray}{gray}{0.87}
\definecolor{tblue}{HTML}{1F77B4}
\definecolor{torange}{HTML}{FF7F0E}
\definecolor{tgreen}{HTML}{2CA02C}
\definecolor{tred}{HTML}{FF0000}
\definecolor{linkcolor}{HTML}{991408}  % red
\definecolor{citecolor}{HTML}{2E7E2A}  % green
\definecolor{filecolor}{HTML}{131877}  % dark blue
\definecolor{menucolor}{HTML}{727500}  % yellow
\definecolor{runcolor} {HTML}{137776}  % teal
\definecolor{urlcolor} {HTML}{0a2bbf}  % blue
\newcommand{\appref}[1]{\hyperref[#1]{Appendix~\ref*{#1}}}% For referencing subsections of the appendix
\def\Snospace~{\S{}}% Provides § without a space after it, so it butts up against section numbers
\newcommand{\hider}[1]{}
\newtheorem{theorem}{Theorem}
\theoremstyle{definition}
\newtheorem{definition}{Definition}
\tikzset{
  maincircle/.style={circle, draw, minimum size=13mm, inner sep=2pt, font=\footnotesize, align=center},
  fwd/.style={->, thick, >=Latex},
  promise/.style={->, dashed, thick, >=Latex,
                  preaction={draw, white, line width=3pt}}, % halo for visibility
  diamondnode/.style={diamond, draw=green!60, fill=green!5, very thick, minimum size=7mm},
  node/.style={circle, draw=green!60, fill=green!5, very thick, minimum size=7mm},
  orange/.style={draw=orange!60, fill=orange!5},
  red/.style={draw=red!60, fill=red!5},
  blue/.style={draw=blue!60, fill=blue!5},
  teal/.style={draw=teal!60, fill=teal!5},
  purple/.style={draw=purple!60, fill=purple!5},
  highlight/.style={ultra thick},
}
\def\1{\bm{1}}
\DeclareMathAlphabet{\mathsfit}{\encodingdefault}{\sfdefault}{m}{sl}
\SetMathAlphabet{\mathsfit}{bold}{\encodingdefault}{\sfdefault}{bx}{n}
\title{Always Keep Your Promises: A Model-Agnostic Attribution Algorithm for Neural Networks}
\author{
{\bf 
Kevin Lee$^{\sharp}$
\,\
Duncan Halverson$^{\S}$
\,\
Pablo Millan~Arias$^{\sharp, \S}$}
\\
$^\sharp$ David R. Cheriton School of Computer Science, University of Waterloo, ON, Canada  \\
$^\S$ Scotiabank AML AI Research, Toronto, ON, Canada \\ 
{\footnotesize 
   \texttt{\{\href{mailto:k327lee@uwaterloo.ca}{k327lee}, \href{mailto:pmillana@uwaterloo.ca}{pmillana}\}@uwaterloo.ca}
   } \\
{\footnotesize
   \texttt{
   \href{mailto:duncan.smith-halverson@scotiabank.com}{duncan.smith-halverson@scotiabank.com}}
   } 
   %\vspace{-2em}
}
\begin{document}

\maketitle

\begin{abstract}
Inability to precisely understand neural network outputs is one of the most severe issues limiting the use of AI in multiple domains, from science and medicine to high-stakes decision and regulatory models. Layer-wise Relevance Propagation (LRP) is an established explainability method that addresses some of these limitations, but widespread adoption has not been possible because existing implementations must be coupled with individual model architectures, rendering them impractical for a rapidly evolving model space.
Our algorithm, DynamicLRP, is a lightweight and flexible method for performing LRP on any neural network with provably minimal overhead. To achieve this, we introduce a novel graph search mechanism called the ``Promise System'' which repurposes deep learning computation graphs for non-gradient computations and is implemented at the primitive tensor operation level using standard automatic differentiation libraries.
We demonstrate that DynamicLRP matches or surpasses specialised implementations in attribution quality across vision and language tasks and remains efficient for models at the billion-parameter scale.
% up to 1 billion parameters. 
Notably, our implementation achieved 99.99\% operation coverage across 31,465 computation nodes from 15 diverse architectures, without any architecture-specific modifications.
This is the first truly model-agnostic LRP solution, enabling high-quality neural network attribution across the full spectrum of modern AI architectures. All code is available at \url{https://github.com/keeinlev/dynamicLRP}.
\end{abstract}

% Teaser might not be allowed in strict ICLR format before Intro, but leaving it here for consistency with Arxiv
\section{Introduction}

Modern neural networks, in particular massive transformer-based foundation models \cite{gemmateam2025gemma3, dosovitskiy2021vit, dubey2024llama3}, offer unprecedented ability to map complex, unstructured inputs to accurate outputs. 
Inability to understand this mapping severely limits use of these powerful models in all domains where the reverse mapping is wanted. Pertinent domains include, but are not limited to: natural science, social science, medicine, law, and finance. 

While a variety of methods for Explainable AI (XAI) do exist, each has substantive drawbacks.
The most well-known methods include perturbation-based approaches such as SHAP \cite{lundberg2017unified} and LIME \cite{ribeiro2016should}. 
These methods are thoroughly unsuitable for use in massive models, with SHAP explanations becoming more misleading with input cardinality \cite{huang2024failings}, and LIME suffering from an arbitrary definition of sensitivity to locality. 
These methods are further troubled by the exponential computation incurred proportional to model complexity \cite{covert2020understanding}. 
Gradient-based methods \cite{simonyan2014deep, sundararajan2017axiomatic, smilkov2017smoothgrad} are fragile, suffering from saturation, shattering, and assumptions of linearity \cite{ancona2018towards, samek2021xaireview}.
Class Activation Mapping \cite{zhou2016learning, selvaraju2017grad} lacks granular precision. 
In contrast, Layer-wise Relevance Propagation (LRP), a backpropagation-based method, \cite{bach2015pixel} provides a principled, efficient alternative by decomposing predictions into input contributions via strict conservation properties \cite{montavon2017explaining}, avoiding the pitfalls of local sensitivity \cite{samek2021xaireview} and proving effective in diverse domains \cite{iwana2019lrpexplainingcnns, bohle2019alzheimers, sun2021lrpfinetuning}.

Despite its theoretical advantages, LRP remains constrained by practical limitations tied to module-specific rules, severely limiting adaptability to rapidly evolving architectures like Transformers \cite{achtibat2024attnlrp, otsuki2024lrpresidualconnections}. Modern libraries like \texttt{iNNvestigate} \cite{alber2018innvestigate}, \texttt{Captum} \cite{kokhlikyan2020captum}, and notably \texttt{Zennit} \cite{anders2021zennit} attempt to mitigate this with automated differentiation, but remain module-centric: new architectures typically need new composite rules and custom configurations. Even specialized advances like \texttt{AttnLRP} \cite{achtibat2024attnlrp} (built on Zennit) require manual adaptation for new model updates (e.g., Llama 2 vs. 3), creating friction and overhead for continued use.

%To address these limitations, we introduce \textbf{DynamicLRP}, a framework that redefines LRP at the level of atomic tensor operations (e.g., addition, matrix multiplication) rather than high-level modules. Analogous to generalized gradient computation, DynamicLRP applies primitive rules to the computation graph, making it inherently model-agnostic (See Figure~\ref{fig:opLevel-LRP}). Our core innovation, the \textbf{"Promise System"}, resolves the critical challenge of missing operand information during the backward pass by leveraging the computation graph for deferred retrieval, preserving LRP guarantees without structural modifications. We empirically demonstrate that this approach matches or exceeds the faithfulness of specialized implementations while maintaining efficiency on models up to 1 billion parameters.

\begin{figure}[h]
  \centering
  \includegraphics[width=1.0\textwidth]{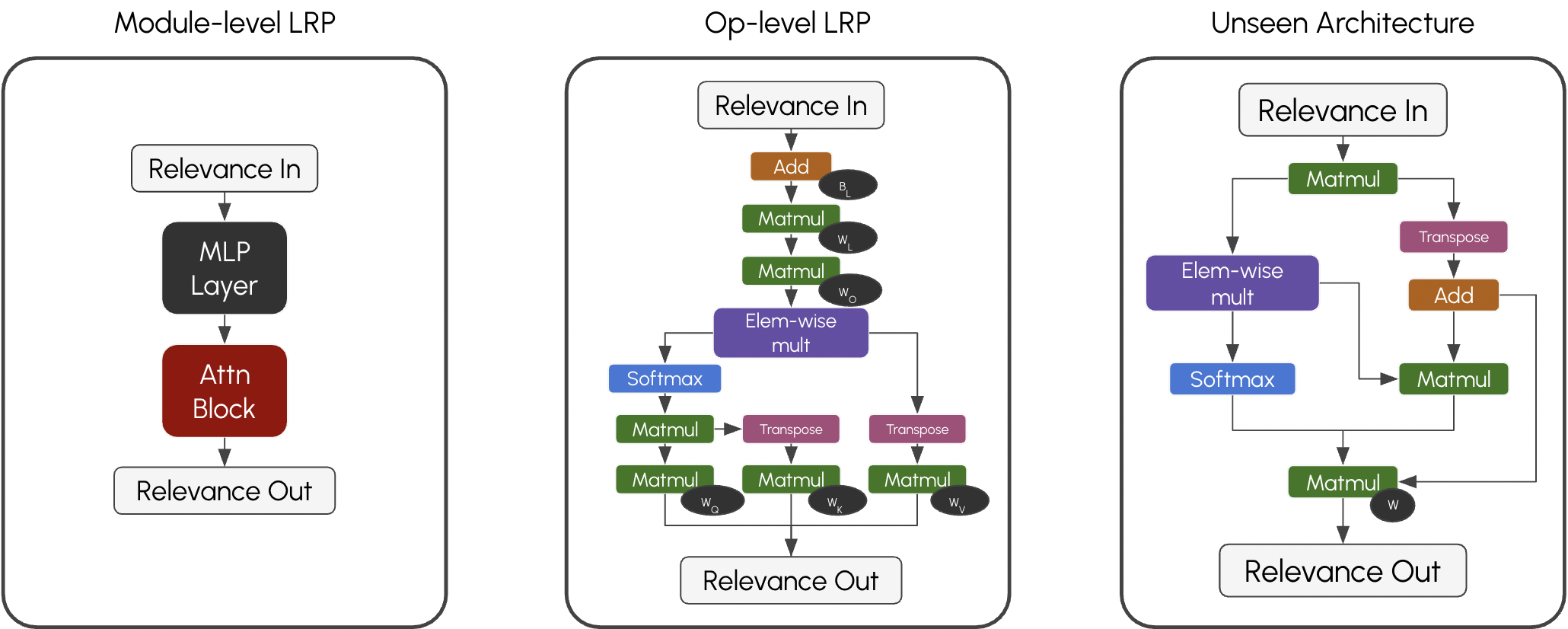}
  \caption{Relevance flow diagrams for sample transformer blocks demonstrate the contrast in design. Low-level implementation allows Dynamic LRP adaptation to any arrangement of covered operations, such as the toy picture on the right. Note that as relevance flows from output to input, the arrow direction opposes inference.}
  % Contrast between a module-level and operation-level approach for a simplified Transformer block. Relevance flows from output to input, thus directed edges are backward-oriented.
  
  % The rightmost diagram represents some fabricated toy example architecture that shares the same operation set as the Transformer, and thus
  % does not require any new implementation to be covered by operation-level LRP, but would need new rules and configurations under module-level LRP.

  \label{fig:opLevel-LRP}
\end{figure}

To address these limitations, we introduce \textbf{DynamicLRP}, which shifts the focus from a module-centric to a primitive-based formulation by redefining the propagation rules at the level of atomic tensor operations (e.g., addition, matrix multiplication). Analogous to generalized gradient computation, DynamicLRP implements LRP rules for 47 primitives and propagates relevance directly through the computation graph, making the method inherently model-agnostic and free of model-specific configuration (Figure~\ref{fig:opLevel-LRP}). This design is enabled by the \textbf{Promise System}, a novel deferred propagation mechanism that resolves the challenge of missing operand information during non-gradient computations within auto-differentiation frameworks (Sec~\ref{sec:promise-system}). Lastly, we show that our approach matches or exceeds specialized implementations in attribution quality (Sec~\ref{sec:experiments}) across popular portable vision (ViT-b-16 \cite{dosovitskiy2021vit}, VGG16 \cite{simonyan2015vgg}) and language (LLaMA \cite{dubey2024llama3}, BERT \cite{devlin2019bertpretrainingdeepbidirectional}, RoBERTa \cite{liu2019roberta}, T5-Flan \cite{lee2023flant5squadv2}) models \textit{while} maintaining efficiency on models into the billion-parameter scale.   

%The main contributions of this paper are as follows: \textit{(i)} \textbf{Operation-Level Paradigm} (Sec~\ref{sec:op-level-lrp}): We shift LRP from a module-centric to a primitive-based approach, implementing rules for 47 operations to ensure model-agnostic applicability without architecture-specific modifications. \textit{(ii)} \textbf{The Promise System} (Sec~\ref{sec:promise-system}): A novel deferred propagation mechanism that enables non-gradient computations within auto-differentiation frameworks, detailed with validated memory and computational complexity bounds. We also provide empirical verification  (Sec~\ref{sec:experiments}) for our methods and show that DynamicLRP matches or surpasses specialized implementations in vision (ViT-b-16, VGG16) and text (LLaMA, RoBERTa, T5-Flan) tasks.

\section{Preliminaries}
\subsection{Classic LRP via Taylor Approximation and LRP variants}
\label{methods:taylor}

The classic LRP formulation as presented by \citet{bach2015pixel} derives LRP using what is called the deep Taylor decomposition framework. This formulation is functional, treating the layers as mathematical operators, and thus it is devoid of the structural assumptions of any given architecture. By taking the first-order Taylor expansion of the function at some root point $x_0$ such that $f(x_0) = 0$, and invoking the proposed LRP relevance conservation property, they define a recursive expression for the relevance of neurons $x_j$ at layer $l$ w.r.t. neurons $x_i$ at layer $l - 1$. For a detailed derivation, see Appendix~\ref{appendix:taylor}

\textbf{The $\epsilon$-LRP Rule}
Assuming the bias contribution is negligible or handled separately (see \cite{achtibat2024attnlrp} for a detailed discussion on bias absorption), the input relevance can be computed by summing over all output neurons and introducing a stabilizing factor $\epsilon$ to properly handle vanishing denominators. For a linear layer $z_j = \sum_i x_i w_{ji} + b_j$ with $x_0 = 0$:
\begin{equation}
    R_{i \leftarrow j}^{l - 1} = \frac{x_i w_{ji}}{z_j + \epsilon} R_j^l
    \label{eq:assignment}
\end{equation}
This recovers the standard $\epsilon$-LRP rule. In matrix form, efficiently implementable on GPUs:
\begin{equation}
    R^{l-1} = x \odot (W^\top \cdot (R^l \oslash (z + \epsilon)))
\end{equation}
where $\odot$ denotes the Hadamard product and $\oslash$ element-wise division.

\textbf{The $\gamma$-LRP Rule}
Attributions can become noisy when using the $\epsilon$-LRP rule, particularly in scenarios where the denominator $z_j + \epsilon$ becomes dominated by the stabilizing factor $\epsilon$, resulting in locally insensitive relevance scores. The $\gamma$-LRP rule \cite{montavon2019lrp}, can be used to address this by introducing the hyperparameter $\gamma$ to act as a multiplicative booster for positive neuron contributions:

\begin{equation}
  R_{i \leftarrow j}^{l - 1} = \frac{x_i \cdot (w_{ji} + \gamma w_{ji}^+)}{\sum_i x_i \cdot (w_{ji} + \gamma w_{ji}^+)} R_j^l
\end{equation}

The Deep Taylor Decomposition framework has been recently used to derive propagation rules for all components in transformer-based architectures~\cite{vaswani2017attention}. In their formulation, Achtibat et al.~\cite{achtibat2024attnlrp} successfully derive rules for both bilinear matrix multiplications and softmax operations used in attention mechanisms. Our work builds on these insights to enable the reuse of these primitives for novel architectures.

\subsection{Computation Graphs in Deep Learning Frameworks}

\label{sec:computation-graph}
Our attribution algorithm relies on the \textbf{Backward Computation Graph}, used by most deep learning frameworks (e.g., PyTorch, TensorFlow) to facilitate automatic differentiation. Informally, the backward computation graph of a neural network is a Directed Acyclic Graph (DAG) $G = (V, E)$ whose nodes $V$ correspond to operations executed during the forward pass, and whose directed edges $(u, v) \in E$ indicate that the output of operation $v$ was used as an input to operation $u$ in the forward pass, so that edges are oriented from outputs toward inputs.

%\begin{definition}[Backward Computation Graph]
%Let $G = (V, E)$ be a Directed Acyclic Graph (DAG) representing the backward pass of a neural network:
%\begin{itemize}
%    \item $V$ is the set of nodes, where each $v \in V$ corresponds to a primitive operation performed during the forward pass (e.g., \textit{AddBackward}, \textit{MmBackward}).
%    \item $E$ is the set of directed edges. An edge $(u, v) \in E$ exists if the operation $v$ produced an input for operation $u$ during the forward pass. Thus, edges in $G$ point from outputs towards inputs.
%\end{itemize}
%\end{definition}

The in-degree and out-degree of a node denote the number of operations that consumed its output and the number of inputs it consumed, respectively. Hence, the model output serves as the \textit{source node} \textit{i.e.,} with zero incoming edges or in-degree of 0, while model parameters and data inputs are \textit{sink nodes} \textit{i.e.,} with zero outgoing edges or out-degree of 0. For any node $u$, its outgoing adjacency set $outadj(u) = \{v | (u, v) \in E\}$ represents the set of operations that produced the inputs for $u$ (i.e., its downstream dependencies in the backward pass), the converse being its incoming adjacency set $inadj(u) = \{v | (v, u) \in E\}$.
% Standard backpropagation traverses this graph $G$ in topological order. For each node $u$, the local gradient function $\tilde{f}_u$ computes gradients w.r.t. its inputs using the Chain Rule.
Our method leverages this structure to propagate relevance scores in a manner analogous to gradient computation, ensuring that at each step, the necessary information is available for accurate attribution.

\section{Methods}
% \subsection{Operation-Level Relevance Propagation}
\label{sec:op-level-lrp}
We motivate shifting the level of abstraction of LRP from layer/module-wise to operation-wise through careful consideration of the modern-day deep learning
paradigm and the practices used by existing LRP implementations. We define a module as an object lying at the maximal granularity
within the model structure of a deep learning framework (e.g. Linear, Convolution, Attention, LayerNorm).
While many modules are defined by a single operation, such as Linear, Convolution, Softmax, etc., certain modules compound many elementary tensor operations, which are not captured at the granularity of modules. There is thus redundancy in redefining propagation steps for tensor primitives within different module composite rules. The other consequence is that there is no upper limit for the number of unique module types, and each requires a new backpropagation rule. This is unsustainable in the long-term. By shifting to the level of operations, the rules for a set of operations needs to only be defined once before all architectures (and thus modules) that decompose to that set of operations are supported.

\subsection{Topological Propagation and the Problem of Missing Activations}
\label{sec:topo-propagation}
Given the backward computation graph $G = (V, E)$ defined in Section~\ref{sec:computation-graph}, our goal is to propagate relevance from the source (model output) to the sinks (inputs/parameters). Note that this operation-wise view contrasts with the classic layer-wise view (Section \ref{methods:taylor}) but is mathematically equivalent, as layers are simply subgraphs of operations. Conservation of relevance applies locally at each node (i.e. $\sum_{v \in outadj(u)} R_{u \to v} = \sum_{p \in parents(u)} R_{p \to u}$)

Ideally, LRP should traverse $G$ in a way such that a node $v$ processes relevance only after it has received all relevance contributions from its parents in $G$, so we must visit nodes in an order that respects dependencies. Specifically, a node $v$ should only compute and distribute its relevance once it has aggregated relevance from all nodes $u$ that depend on it (i.e., all $u \in inadj(v)$). This dependency constraint is satisfied by traversing $G$ in some topological order. We present the graph traversal algorithm used for this purpose in Algorithm~\ref{alg:topological-traversal}.

\iftoggle{arxiv}{
\begin{algorithm}[!h]
\small
  \caption{Topological Traversal of Backward Graph}
  \label{alg:topological-traversal}
  \begin{algorithmic}[1]
  \State \textbf{Input:} Backward Graph $G = (V, E)$
  \State Initialize $S$ as a stack containing all nodes with $indegree(v) = 0$
  \State Initialize $pending(v) = indegree(v) \quad \forall v \in V$
  \While{$S$ is not empty}
    \State Pop node $v$ from $S$
    \State \Call{Process}{$v$} \Comment{Apply operation-specific propagation}
    \For{$w \in outadj(v)$}
      \State $pending(w) \gets pending(w) - 1$
      \If{$pending(w) == 0$}
        \State Push $w$ onto $S$
      \EndIf
    \EndFor
  \EndWhile
  \end{algorithmic}
\end{algorithm}

}{}

While the topological traversal ensures the correct \textit{order} of operations, propagating the relevance will fail in some cases due to the \textit{information} available at each node. Consider the propagation rule for addition $c = a + b$, which distributes the relevance proportional to the contribution magnitude:
\begin{equation}
  R_{a} = R_{c} \times |a|/(|a| + |b|) \quad\text{and}\quad R_{b} = R_{c} \times |b|/(|a| + |b|)
\end{equation}
To compute $R_a$ and $R_b$, we strictly require the values of operands $a$ and $b$. However, modern automatic differentiation frameworks optimize memory by discarding $a$ and $b$ after the forward pass since $\frac{\partial c}{\partial a} = \frac{\partial c}{\partial b} = 1$. Thus, even if the topological traversal visits the Addition node at the correct time, the operand tensors required for the attribution rule are missing from the graph. %This problem is not unique to addition and arises in operations w (e.g., `\texttt{unbind}', `\texttt{stack}', `\texttt{cat}').

\subsection{The Promise System}
\label{sec:promise-system}
When we reach a Node in traversal where propagation would halt from such a case, we instantiate a \textbf{Promise},
which defers the propagation computations and retrieves the missing tensors from further down in the graph. Conceptually, a Promise acts as a \textit{placeholder} for missing activations. When an operation requires an unavailable tensor to compute relevance, the Promise suspends relevance propagation but continues traversing through the graph to recover the needed values. Once these are found, the relevance propagation at the problematic Node is computed, and the propagation ``catches up'' across all the Nodes that were traversed during the search, which are recorded to avoid backtracking and retraversal of the computation graph. We will now formally define this system.\\

\subsubsection{Basic Promise Mechanism}
% We introduce Promises for deferred activation retrieval. The objective of a Promise is to act as a placeholder while we continue traversing the graph in search of the missing values.

\begin{definition}
\label{def:promise}
Let $\mathcal{T}$ be the space of tensors. We define a Promise $\mathcal{P}$ at a Node $u$ as a tuple $\mathcal{P}_u = (R_{out}, \rho, \chi, \mathcal{A}, \mathcal{R}_{in})$ where:
\begin{itemize}
    \item $R_{out} \in \mathcal{T}$ is the accumulated output relevance.
    \item $\rho \in \{0, 1\}$ is the \textit{readiness flag}, where $\rho=1$ indicates that all missing arguments have been retrieved.
    \item $\chi \in \{0, 1\}$ is the \textit{completeness flag}, where $\chi=1$ indicates that relevance has been distributed to all the inputs.
    \item $\mathcal{A} = (a_1,\dots, a_k)$ is a sequence of retrieved argument tensors, where $ a_i \in \mathcal{T}$.
    \item $\mathcal{R}_{in} = (r_1, \dots, r_k)$ is a sequence of computed input relevances, where $r_i \in \mathcal{T}$.
\end{itemize}
\end{definition}

The node $v$ where a Promise is instantiated is termed the \textbf{Origin Node}. To resolve the missing arguments $\mathcal{A}$, the Promise spawns $k$ \textbf{Promise Branches}, where the $i$-th branch is responsible for retrieving $a_i$. As an illustration, consider the following subgraph requiring a single Promise Branch, where red nodes do not store forward activations but require them for relevance propagation, green nodes do not store any forward activations and do not require them for relevance propagation, and yellow nodes store forward activations: 
\vspace{10.0pt}
\begin{center}
\resizebox{0.5\linewidth}{!}{%
\begin{tikzpicture}
  \node[node, purple] (node1) {A};
  \node[node] (node2) [right=of node1] {B};
  \node[node] (node3) [right=of node2] {C};
  \node[node] (node4) [right=of node3] {D};
  \node[node, orange] (node5) [right=of node4] {E};

  \draw[->] (node1.east) -- (node2.west);
  \draw[->] (node2.east) -- (node3.west);
  \draw[->] (node3.east) -- (node4.west);
  \draw[->] (node4.east) -- (node5.west);
  \begin{scope}[transform canvas={yshift=.3cm}]
    \draw[->, shorten <= -0.5cm, shorten >= -0.5cm] (node5.north) -- (node1.north) node[midway, above] {Model forward};
  \end{scope}
  \draw [decorate,decoration={brace,amplitude=5pt,mirror,raise=2ex}]
    (node1.south west) -- (node4.south east) node[midway,yshift=-3em]{Do not store forward activations};
\end{tikzpicture}%
}
\end{center}

Note that without node A's forward activation, it is not possible to continue relevance propagation. However, since there exists a node, e.g. node $E$, further down the traversal path whose forward output is available, we can reapply the transformations of the nodes in between A and E to reconstruct A's missing forward activation and propagate the relevance. For deferred propagation, each branch traverses the graph until it encounters a node $u$ whose forward output is locally retrievable; We term this the \textbf{Arg Node}.

\textbf{Proposition 1:} For any node $v$ in a computation graph, there exists at least one Arg Node which is a descendent of $v$ in the computation graph.

The proof is provided in Appendix~\ref{sec:lrp-complexity}.

Let $\pi = (v, w_1, \dots, w_m, u)$ be the traversal path from Origin $v$ to Arg Node $u$. For each intermediate node $w_j$, we record its forward operation $f_{w_j}$ and relevance rule $b_{w_j}$, forming two functional chains:
a Forward Chain $\Phi_{v \leftarrow u} = f_{w_1} \circ \dots \circ f_{w_m}$ and a Backward Chain $\Psi_{u \to v} = b_{w_m} \circ \dots \circ b_{w_1}$. Upon reaching $u$, we retrieve its output $y_u$ and reconstruct the missing argument $a_i \in \mathcal{A}$ via $a_i = \Phi_{v \leftarrow u}(y_u)$.
Once all arguments in $\mathcal{A}$ are resolved, we set $\rho_{u}=1$, and the Origin Node $v$ computes the local relevance distribution $\mathcal{R}_{in}$ using $\mathcal{A}$ and $R_{out}$. Finally, the input relevance $r_i \in \mathcal{R}_{in}$ is fast-forwarded back to $u$ via the backward chain: $R_{u} = \Psi_{u \to v}(r_i)$. 
This mechanism effectively defers propagation until dependencies are satisfied without backtracking.
We provide a full example in Appendix~\ref{appendix:basic-promise-example}.

\subsubsection{Promise Nesting and Trees}
Now that we have introduced Promises as placeholder relevance inputs, we must also consider how any given Node will handle a Promise Branch input.
\begin{definition}
A Promise-Generating Operation is an autograd node $v_p$ that may potentially create a new Promise object during propagation.
We categorize all Promise-Generating Operations as one of the following:
\begin{itemize}
  \item \textbf{Strict: }$v_p$ will always create a new Promise object, independent of its relevance input types.
  \item \textbf{Dependent: }$v_p$ will create a new Promise object only if at least one of its relevance inputs is a Promise, otherwise it returns a tensor relevance.
\end{itemize}

Note that whether a Dependent Promise-Generating Node instantiates a new Promise or not is architecture-specific.

When a Promise-Generating Node $v$ receives a Promise Branch $p$ as input, $v$ creates a new Promise $\mathcal{P}'$ and nests all of $\mathcal{P}'$'s branches as children of $p$ via parent-child connections, forming a Promise Tree.
Promise Trees resolve bottom-up, starting with leaf Promise Branches, which encounter Arg Nodes, and forward-chain to reconstruct their ancestor Promises' activations via parent connections. Once all activations of the Promise at the root of the Tree have been recovered, the relevance is propagated back down the Tree via child connections and backward chains. See Appendix~\ref{appendix:promise-trees} for the formal resolution algorithm.
\end{definition}

\subsubsection{Promise Deadlock}
Promises can cause circular dependencies between Promise Branches. Consider the following graph, reflecting a pattern found in residual connections, crucial in transformer networks and ResNets. Let A be a Promise-Generating Node, and let B and E be Arg Nodes, coloured orange.
Nodes coloured red are traversed, green are untraversed.

Recall that we may only consider a node for traversal if all of its in-neighbours have propagated to it. In Step \textit{(a)}, A creates a Promise $\mathcal{P}$, propagates its Promise Branches
$p_1$, $p_2$ to B and D. We traverse to B, as all its inputs have now landed. 

B is an Arg Node, so the Promise Branch will trigger a forward-chain to obtain the activation at A and save it.
In Step \textit{(b)}, we now stall traversal and enqueue B until the Promise is complete. Next, in Step \textit{(c)}, we traverse to D, which received Promise Branch $p_2$ from A as input.

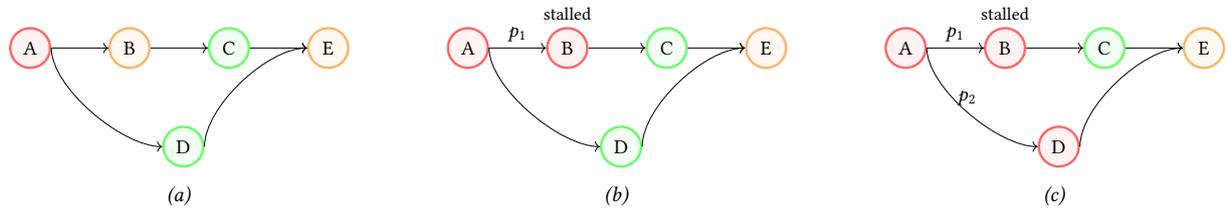
\begin{figure}[ht]
    \centering
    \begin{subfigure}[b]{0.28\textwidth}
        \centering
        \resizebox{\linewidth}{!}{%
        \begin{tikzpicture}
        %Nodes
        \node[node, red] (node1) {A};
        \node[node, orange] (node2) [right=of node1] {B};
        \node[node] (node3) [right=of node2] {C};
        \node[node] (node4) [below=of node3, xshift=-8mm] {D};
        \node[node, orange] (node5) [right=of node3] {E};
        
        %Lines
        \draw[->] (node1.east) -- (node2.west);
        \draw[->] (node2.east) -- (node3.west);
        \draw[->] (node3.east) -- (node5.west);
        \draw[->] (node1.east) .. controls +(down:7mm) and +(left:7mm) .. (node4.west);
        \draw[->] (node4.east) .. controls +(up:7mm) and +(left:7mm) .. (node5.west);
        \end{tikzpicture}%
        }
        \caption{}
    \end{subfigure}
    \hfill % Adds flexible space between figures
    \begin{subfigure}[b]{0.28\textwidth}
        \centering
        \resizebox{\linewidth}{!}{%
        \begin{tikzpicture} 
        %Nodes
        \node[node, red] (node1) {A};
        \node[node, red, label={\small stalled}] (node2) [right=of node1] {B};
        \node[node] (node3) [right=of node2] {C};
        \node[node] (node4) [below=of node3, xshift=-8mm] {D};
        \node[node, orange] (node5) [right=of node3] {E};
        
        %Lines
        \draw[->] (node1.east) -- (node2.west) node[midway, above] {$p_1$};
        \draw[->] (node2.east) -- (node3.west);
        \draw[->] (node3.east) -- (node5.west);
        \draw[->] (node1.east) .. controls +(down:7mm) and +(left:7mm) .. (node4.west);
        \draw[->] (node4.east) .. controls +(up:7mm) and +(left:7mm) .. (node5.west);
        \end{tikzpicture}%
        }
        \caption{}
    \end{subfigure}
    \hfill
    \begin{subfigure}[b]{0.28\textwidth}
        \centering
        \resizebox{\linewidth}{!}{%
        \begin{tikzpicture} 
        %Nodes
        \node[node, red] (node1) {A};
        \node[node, red, label={\small stalled}] (node2) [right=of node1] {B};
        \node[node] (node3) [right=of node2] {C};
        \node[node, red] (node4) [below=of node3, xshift=-8mm] {D};
        \node[node, orange] (node5) [right=of node3] {E};
        
        %Lines
        \draw[->] (node1.east) -- (node2.west) node[midway, above] {$p_1$};
        \draw[->] (node2.east) -- (node3.west);
        \draw[->] (node3.east) -- (node5.west);
        \draw[->] (node1.east) .. controls +(down:7mm) and +(left:7mm) .. (node4.west) node[midway, above] {$p_2$};
        \draw[->] (node4.east) .. controls +(up:7mm) and +(left:7mm) .. (node5.west);
        \end{tikzpicture}%
        }
        \caption{}
    \end{subfigure}
        \caption{Illustration of Promise Deadlock. (a) Origin node A distributes promise branches to retrieve activations. (b) Branch $p_1$ reaches Arg Node B, which stalls waiting for completion. (c) Branch $p_2$ reaches Arg Node D, which propagates to E. However, E depends on C, which depends on the stalled B, creating a cycle preventing Promise completion.}
\end{figure}
\vspace{-5pt}

The solution, in brief, is to break our traversal heuristic upon such situations, and allow Promise Branches to propagate until they reach Arg Nodes. The specific manner in which this is done is explained fully in Appendix~\ref{appendix:pre-promises}, but a simplified version of the propagation algorithm that considers all of them is presented in Algorithm~\ref{alg:promise-propagation}. Note that this allows delayed computation to occur without the graph traversal pointer having to backtrack or revisit any node not on the current traversal frontier, a property which we prove in Appendix~\ref{appendix:unique-chains} and allows us to define clear upper bounds on the computational and memory overhead introduced by the Promise system in Appendix~\ref{sec:lrp-complexity}.

\begin{algorithm}[!h]
\small
  \caption{Operation-Level Relevance Propagation With Promises}
  \label{alg:promise-propagation}
  \begin{algorithmic}[1]
    \State \textbf{Input:}Model output \texttt{hidden\_states}, Relevance attribution target \texttt{target\_node}, Incoming relevance $R_{out}$
    \State Initialize $\texttt{stack} \gets [\texttt{hidden\_states.grad\_fn}]$
    \State Initialize $\texttt{heuristic\_break\_queue} \gets [\hspace{0.2em}]$
    \State Initialize $\texttt{input\_tracker} \gets \{ \texttt{node} : [\hspace{0.2em}] \text{ for all nodes in graph} \}$
    \State Initialize $\texttt{nodes\_pending} \gets \{ \texttt{node} : \text{in-degree of } \texttt{node} \text{ for all nodes in graph}\}$
    \Repeat
      \If{\texttt{heuristic\_break\_queue} is not empty}
        \State $\texttt{curnode} \gets \texttt{heuristic\_break\_queue.dequeue()}$
        \State Create new Pre-Promise $P_{pre}$, assign Promise Branch in \texttt{input\_tracker[curnode]} as parent \Comment{See line~\ref{lst:line:prepromise-condition}}
        \State Store $P_{pre}$ in \texttt{curnode} metadata
        \State $\texttt{curnode\_in\_rel} \gets P_{pre}$
      \Else
        \State $\texttt{curnode} \gets \texttt{stack.pop()}$ \Comment{Process next node}
        \State $\texttt{curnode\_in\_rel} \gets \text{Relevance accumulated at } \texttt{curnode}$
        \If{\texttt{curnode} is \texttt{target\_node}} \Comment{Target reached}
          \State \Return $R_{in} := \texttt{curnode\_in\_rel}$
        \EndIf
      \EndIf
      \If{\texttt{curnode} requires Promise} \Comment{Defer computation}
        \State Create new Promise $P$ using \texttt{type(curnode)} and assign parent-child connections if applicable
        \State Assign each child of \texttt{curnode} its respective branch of $P$ via \texttt{input\_tracker[child]}
      \Else
        \State Execute propagation function for \texttt{curnode} \Comment{Applies LRP rule}
        \State Assign each child of \texttt{curnode} its respective split of the relevance via \texttt{input\_tracker[child]}
      \EndIf
      \For{each \texttt{child} of \texttt{curnode}}
        \State Decrement \texttt{nodes\_pending[child]} by 1 \Comment{Update dependencies}
        \If{\texttt{nodes\_pending[child]} is 0}
          \State Push \texttt{child} to \texttt{stack} \Comment{Enqueue ready children}
        \ElsIf{\texttt{input\_tracker[child]} contains a Promise Branch} \label{lst:line:prepromise-condition}
          \State Add \texttt{child} to \texttt{heuristic\_break\_queue}
        \EndIf
      \EndFor
    \Until{\texttt{stack} is empty} \Comment{All nodes processed}
      \State \texttt{curnode} $\gets$ \texttt{stack.pop()} \Comment{Process next node}
    \State \Return Input relevance $R_{in}$
  \end{algorithmic}
\end{algorithm}

\subsubsection{Caching Promise Structures}
\label{sec:promise-caching}
The previously described methodology covers only the first pass of the LRP algorithm. Repeated runs of LRP on the same model architecture yield an identical set of saved Promise computation paths. So, we cache these paths to improve the algorithm's scalability. Once a Promise's computation path has been defined, it can effectively be reduced to a single node, eliminating the graph traversal overhead of its internal nodes. Finally, consider that if we know the Promise chains beforehand and can reduce them to single nodes, then the order in which we perform computations becomes fully deterministic and independent of edge connections, reducing the graph traversal overhead to $O(n)$, even when promise deadlock is encountered during initial traversal.

\section{Experimental Setup}
\label{sec:experiments}

% There are distinct metrics used to evaluate \textit{post hoc} explainability techniques, depending on the depth and quality of the available annotations and the specific properties of interest. Among these, faithfulness assesses whether the explanation accurately reflects the model's decision process. 
In this work, we aim to demonstrate that DynamicLRP produces faithful explanations for modern architectures of varying scale.  In attempt to measure faithfulness, we adapt the evaluation strategies from \cite{achtibat2024attnlrp} using classification and exctractive question-answering tasks.

\subsection{Classification Tasks}
\label{sec:experiments:classification}

Input perturbation, as explored and advanced by \citet{blucher2024pixelflipping}, \citet{deyoung2020eraserbenchmarkevaluaterationalized}, and \citet{schulz2020abpc}, has become a widely-used standard for measuring attribution faithfulness. This method leverages the model itself as an oracle to validate the feature ranking generated by an attribution method. By systematically removing or replacing input features (e.g., tokens or pixels) and observing the degradation in performance with respect to a baseline, we can verify if a feature's attributed ``importance'' is in-line with a model's prediction. We quantify this using the Area Between Perturbation Curves (ABPC), ranking features by importance and iterating in both orders:
\begin{enumerate}
    \item \textbf{Most Relevant First (MoRF)}: We occlude the most important features first. A faithful attribution should result in a steep decline in the model's confidence for the true label, as crucial information is removed.
    \item \textbf{Least Relevant First (LeRF)}: We occlude the least important features first. A faithful attribution should result in a flat curve or minimal decrease (potentially even an increase) in confidence, as irrelevant features are removed.
\end{enumerate}
The ABPC is defined as $\text{AUC}(\text{LeRF}) - \text{AUC}(\text{MoRF})$, forming a unified metric where higher values indicate better faithfulness (higher stability under LeRF and sharper drop under MoRF). Other metrics related to faithfulness are \textit{comprehensiveness}, defined as $\text{AUC}(\text{baseline}) - \text{AUC}(\text{MoRF})$ and \textit{sufficiency}, defined as $\text{AUC}(\text{baseline}) - \text{AUC}(\text{LeRF})$.

\paragraph{Visual Classification Setup} We use the the base VGG16 model  \citep{simonyan2015vgg} evaluated on the ImageNette-320 dataset \citep{howard2019imagenette}, and the   base ViT-b-16 \citep{dosovitskiy2021vit} architecture on the CIFAR-10 dataset \citep{krizhevsky2009learning}. For perturbation, we occlude features by replacing them with a Gaussian-blurred version of the image ($\text{kernel size}=51$, $\sigma=20$). For ViT-b-16, we occlude the top-4 $16 \times 16$ patches (pooled by max relevance) at each step. For VGG16, we occlude 1024 individual pixels per step. 

\paragraph{Text Classification Setup}
Following the experimental setup in \cite{achtibat2024attnlrp}, we first use a finetuned Llama-3.2-1B model \citep{dubey2024llama3} on the IMDb dataset \citep{maas-IMDB} for sentiment analysis. We evaluate explanations by determining the relevance of input tokens to the predicted class. For language modelling, we use the base model on the English subset of the Wikipedia \footnote{\cite{wikidump}}  dataset and frame the task as next-token prediction. We compute attribution for the ground-truth next token given a context window of 512 tokens. In both settings, we apply LRP-$\gamma$ with $\gamma_{linear}=1.0$ and compare against baseline attribution methods using MoRF and LeRF occlusion metrics after masking one token per step.

\subsection{Extractive Question-Answering Tasks}
\label{sec:experiments:qa}

Extractive Question-Answering (EQA) tasks differ structurally from classification: the model identifies a span within the input context as the answer. In this setting, the MoRF/LeRF strategy can be redundant, as the tokens constituting the predicted span are by definition necessary for the model to output that specific span. Therefore, we simplify the measurement to checking if the highest-ranked tokens are within the predicted answer span. We evaluate faithfulness via the accuracy of the top-1 attributed token falling within the label golden span and the model's predicted span. We use finetuned versions of RoBERTa-large \citep{liu2019roberta} and Flan-T5-large \citep{chung2022flant5} on the SQuAD-v2 dataset \citep{rajpurkar2018squadv2,chan2021robertasquadv2, lee2023flant5squadv2}. We include all answerable examples in the validation set, skipping only those where the model predicts an invalid span (start index > end index). We run LRP-$\gamma$ with $\gamma_{linear} = 0.001$.  We mask the relevance of tokens found in the question before taking the top-1 attributed token. 
\section{Results}
 As can be seen from tables~\ref{results-table}, \ref{vit-table}, and \ref{vgg-table}, DynamicLRP consistently outperforms generic attribution methods in vision tasks and matches or surpasses model-specific AttnLRP implementation (Figure~\ref{fig:garbage-truck}). 

\begin{figure}[!ht]
    \centering
    \begin{subfigure}[b]{0.28\textwidth}
        \centering
        \includegraphics[width=0.8\linewidth]{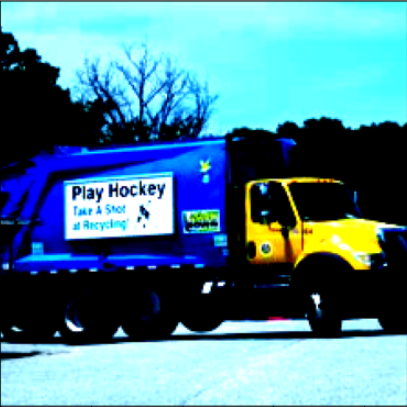}
        \caption{}
    \end{subfigure}
    \hfill % Adds flexible space between figures
    \begin{subfigure}[b]{0.28\textwidth}
        \centering
        \includegraphics[width=.8\linewidth]{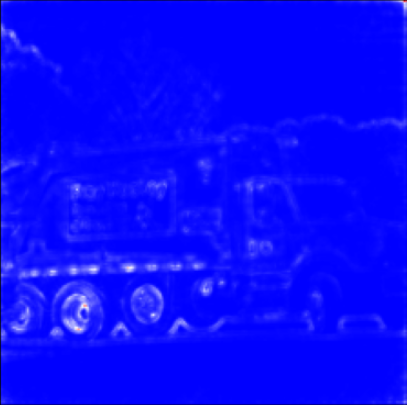}
        \caption{}
    \end{subfigure}
    \hfill
    \begin{subfigure}[b]{0.28\textwidth}
        \centering
        \includegraphics[width=.8\linewidth]{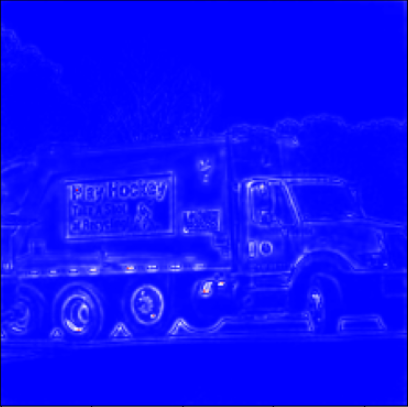}
        
        \caption{}
    \end{subfigure}
        \caption{Visual representation of attributions produced by different LRP algorithms for the predicted class truck by a {ViT-b} architecture. (a) Original image (b) Zennit LRP attributions (c) DynamicLRP attributions}
\label{fig:garbage-truck}
\end{figure}

% 15\% worse compared to the specialized AttnLRP implementation in the Llama3.2-1B experiments. However, we show through Figure~\ref{fig:llama-plots} that the desired attribution signals between the two implementations are extremely similar. DynamicLRP also demonstrates equivalent performance to AttnLRP in explaining image classification predictions in ViT-B-16, the two greatly outperforming other established attribution methods.

%  By contrast with other methods, we note LRP's exceptional ability to correctly attribute input features with minimal noise in EQA tasks. While the Top-1 Token-to-Golden-Span (TGS) accuracy reflects a method's ability to capture any token within the ground truth span, it is also inherently dependent on the model's performance on the dataset, and a detailed analysis needs to consider model performance metrics such as F1 score, Exact Match percentage, or Jaccard Similarity to label spans. We provide these details in Table~\ref{squadv2-table} in the appendix.

In Language tasks, DynamicLRP demonstrates faithful attribution in all cases, achieving the best score in three out of five language tasks and ranking second in the others. When compared on the larger model, LLaMA3.2-1B, we note that the specialized Attn-LRP implementation outperforms DynamicLRP. We attribute this difference to relevance distribution rather than signal quality. DynamicLRP tends to spread relevance more broadly across influential euqally influential tokens, whereas AttnLRP produces sharper peaks that are favoured by rank-based metrics. Qualitatively, however, both methods yield highly consistent attribution signals and identify the same semantic features as important (Figure~\ref{fig:llama-plots}). For extractive question answering tasks, DynamicLRP outperforms all other methodologies. It is worth mentioning that Top-1 Token-to-Gold-Span(TGS) accuracy is tightly coupled to model predictive performance and therefore confounds attribution quality with task accuracy, requiring careful interpretation alongside metrics such as F1 and Exact Match (Table~\ref{squadv2-table}). To disentangle these effects, we consider Top-1 Token-to-Predicted-Span (TPS), which measures whether attributions identify the tokens actually used by the model for its prediction. LRP-based methods maintain consistently high TPS, while gradient-based methods degrade sharply on erroneous predictions. This indicates that LRP preserves the importance of the predicted span throughout the computation path, even when the model fails. The robustness of DynamicLRP on variuos models highlights its practical value as an architecture-agnostic diagnostic tool, particularly during model development, where explaining failures is often more informative than explaining successes.
\begin{table}[!thp]
\scriptsize
\caption{Quality of explanations across attribution methods. For vision and language classification tasks(ImageNette, CIFAR, IMDb and Wikipedia), faithfulness is measured by the Area Between MoRF and LeRF perturbation curves. For question answering (SQuADv2) tasks, we report Top-1 Token-to-Golden-Span (TGS) and Token-to-Predicted-Span (TPS). Higher is better; best results in \textbf{bold}.}
\vspace{-0.35cm}
\begin{center}
  \label{results-table}
  \begin{tabular}{lccccccc}
    \toprule
    \textbf{Task type} & \multicolumn{2}{c}{Image Class.} & Text Class. & Causal LM & \multicolumn{3}{c}{Extractive QA} \\
    \midrule
    \textbf{Method} & \textbf{VGG} & \textbf{ViT-b-16} & \textbf{LlaMa3.2-1B} &\textbf{LlaMa3.2-1B} & \textbf{BERT} & \textbf{RoBERTa-L} & \textbf{T5-Flan-L} \\
    & ImageNette-320 & CIFAR-10 & IMDB  & Wikipedia & SQuADv2 & SQuADv2 & SQuADv2 \\
    \midrule
    
    IG & 1.39 & 0.10 & 1.77 & 1.68 & 0.438 (0.558) & 0.353 (0.134) & 0.798 (0.127) \\
    GradSHAP &1.62 & 0.074 & 2.61 & 1.59 & 0.419 (0.527) & 0.374 (0.143) & 0.795 (0.157) \\
    % Input $\times$ Grad & & -0.003 &  &  & 0.283 (0.223) & 0.323 (0.218) & 0.537 (0.156) \\
    %KernelSHAP &  &  &  & 0.213 (0.267) & 0.188 (0.0792) & 0.221 (0.0760) \\
    Zennit (Attn-LRP) &  1.69 & \textbf{1.46} & \textbf{3.73} & \textbf{3.16} & 0.517 (0.805) & N/A$^{*}$ & N/A$^{*}$\\
    \midrule
    DynamicLRP (ours) & \textbf{1.77} & \textbf{1.46} & 3.18 & 2.59 & \textbf{0.579} (\textbf{0.986}) & \textbf{0.937} (\textbf{0.889}) & \textbf{0.951} (\textbf{0.986}) \\
    \bottomrule
  \end{tabular}
\end{center}
%\begin{flushleft}
%{
%\iftoggle{arxiv}{\footnotesize}%{\scriptsize}
$^{*}$ The architecture is not supported by the respective method.
%}
%\end{flushleft}
\end{table}

\subsection{Architecture-Specific Factors on LRP Computational Overhead}
DynamicLRP's computational overhead depends on two key architectural traits: Promise Depth ($\delta$) and Promise Density ($\rho$). Promise depth $\delta$ of an architecture is the maximum graph distance to an Arg Node for all Promise Branches. The Promise density is the ratio of promise-generating operations to total operations, $\rho = \frac{|V_P|}{|V|}$. Table~\ref{architecture-promises-table} reports these values across architectures, providing empirical insight to the runtime and memory complexity. Notably, model variants of Flan-T5-large show that increased model complexity (e.g., more autograd nodes) does not necessarily scale up Promise density. This suggests that complex training procedures or architectural expansions do not disproportionately increase the ``promise overhead'' implying favourable scalability for our method.

\begin{table}[!h]
\scriptsize
\begin{center}
  \caption{Architecture-specific values relating to the Promise System's impact on LRP efficiency. We include a separate row for the finetuned (FT) version of Flan-T5-large, which uses LoRA adapters, changing some of these values drastically.
  Depth is in terms of either the number of Transformer layers (T) or convolutional layers (C).}
  \label{architecture-promises-table}
  \begin{tabular}{ lccccccc  }
    \toprule
    \textbf{Model} & \textbf{$d_{model}$} & \textbf{Depth} & \textbf{Promises} & \textbf{Internal nodes} & \textbf{$\delta$} & \textbf{$\rho$} & \textbf{Total nodes} \\
    \midrule
    RoBERTa-large & 1024 & 24 (T) & 243 & 48 & 438 & 0.166 & 1,465 \\ 
    Flan-T5-large & 1024 & 48 (T) & 798 & 348 & 980 & 0.140 & 5,713 \\ 
    Flan-T5-large (FT) & 1024 & 50 (T) & 847 & 1,524 & 2,972 & 0.0777 & 10,897 \\ 
    ViT-b-16 & 768 & 1 (C) 12 (T) & 224 & 27 & 227 & 0.288 & 779 \\ 
    VGG16 & 512 & 13 (C) & 32 & 0 & 32 & 0.356 & 90 \\ 
    \bottomrule
  \end{tabular}
\end{center}
\end{table}
\vspace{-20pt}

\subsection{Operation Coverage}
 By targeting a bounded set of standard tensor operations, our operation-level approach aims for broad coverage. Table~\ref{coverage-table} highlights our coverage on a representative subset of architectures, achieving near-perfect node coverage across diverse modalities without model-specific code. We achieve an aggregate 99.99\% node coverage (31,464/31,465 nodes) across a broad suite of 15 models spanning vision, language, audio, and state-space architectures. See Appendix~\ref{sec:appendix_extra_results} for the full suite of 15 models.
\begin{table}[!ht]
\scriptsize
\begin{center}
  \caption{DynamicLRP operation coverage for selected architectures. }
  \label{coverage-table}
  \begin{tabular}{ lcccc }
    \toprule
    \textbf{Model} & \textbf{Modality} & \textbf{Covered Nodes} & \textbf{Unique Ops covered} \\
    \midrule
    ViT-b-16 \cite{dosovitskiy2021vit} & Vision & 779/779 & 16/16 \\
    Wav2Vec2-xls-r-300m \cite{babu2021xlsr} & Audio & 2,021/2,022 & 18/19 \\
    Llama3.2-1B \cite{dubey2024llama3} & Language & 1,787/1,787 & 24/24 \\
    Mamba-130m \cite{gu2024mamba} & Language & 3,421/3,421 & 26/26 \\
    \bottomrule
  \end{tabular}
\end{center}
\end{table}
\vspace{-20pt}

\section{Discussion}
%In this section, we discuss several important aspects of DynamicLRP, including architecture-specific factors influencing LRP performance,  operation coverage and extensibility to other deep learning frameworks.
%\subsection{Extensibility to Other Deep Learning Frameworks}
Although current implementation is specific to PyTorch, our principal intellectual contribution is the framework-agnostic Promise-based design. Auto-differentiation engines universally rely on computation graphs to store operation-level data. Since the mathematical requirements for differentiating functions are invariant (the set of ``givens'' required for a derivative is a property of the function, not the software), the prerequisite data structures are functionally identical across frameworks. Thus, DynamicLRP's operation-level logic is portable to any system using dynamic computation graphs.

%\subsection{Extensibility to Other Deep Learning Frameworks}
% While it may seem that this design is strictly PyTorch-specific, we claim that the Promise-based Dynamic LRP can be extended to any deep learning framework with an auto-differentiation library. We consider that differentiation is first and foremost a mathematical concept, and auto-differentiation strictly relies on computation graphs. This implies two important invariants across any auto-differentiation engine. Being built on computation graphs, these engines themselves are defined at the operation level, and for any function to be differentiated, the set of givens required is purely characteristic of the mathematical properties of the function in question, not the framework or programming language being used. Thus, all implementations of auto-differentiation will be required to store the same minimum set of data for each function they share, in the structure of a computation graph. This signifies to us that the concept of operation-level LRP is sound and viable in any such deep-learning framework.

\section{Conclusion}
We have introduced a new algorithm (DynamicLRP) grounded in operations at the primitive tensor level, and a novel mathematical framework (the Promise System) that enables the extension of LRP to arbitrary model architectures. Together, these innovations represent a framework to practically enable the use of LRP for model developers. Looking into existing XAI literature and making use of some theory, we give a sound justification for the choice of this method as a scalable and accurate method for XAI. Empirically, we demonstrate state-of-the-art performance of DynamicLRP both in terms of model attribution faithfulness and in terms of computational overhead. We believe that while further experimental work could be of benefit, the intellectual contributions of this work are substantial and form the groundwork for a promising approach to XAI.

\section*{Acknowledgements}
The authors thank Julie Walsh, Dan Shonfeld, and Mirela Gondor from Scotiabank for their support and encouragement throughout this project. This work was made possible in part by a generous compute donation from Lambda Labs to PMA. 

\bibliography{iclr2026} % Using central bib file
\bibliographystyle{iclr2026/iclr2026_conference}

\appendix
% This changes the actual number used by both captions and \ref{}
\renewcommand{\thetable}{A.\arabic{table}} 
\setcounter{table}{0} % Restart the count for the appendix

\renewcommand{\thefigure}{A.\arabic{figure}} 
\setcounter{figure}{0} % Restart the count for the appendix

\section{LRP Derivation via Deep Taylor Decomposition}
\label{appendix:taylor}

LRP redistributes the prediction score (relevance) backwards through a neural network. The challenge of faithful attribution lies in identifying a meaningful distribution rule for the relevance assigned to input $i$ by output neuron $j$, $R_{i\leftarrow j}$. 
We take advantage of the Deep Taylor Decomposition framework~\cite{montavon2017explaining, achtibat2024attnlrp} to locally linearize and decompose operations into independent contributions, specifically by computing a first-order Taylor expansion of a function $f_j(x)$ at a reference point $x_0$:
\begin{align}
      f_j(x) &= f_j(x_0) + \sum_i \frac{\partial f_j}{\partial x_i}\Big|_{x_0} (x_i - x_{0,i})  + \mathcal{O}(|x -x_0|^2) \nonumber \\
      &= \sum_i \frac{\partial f_j}{\partial x_i}\Big|_{x_0} x_i + \tilde{b}_j
     \label{eq:taylor}
\end{align} 
where $\tilde{b}_j$ collects the constant terms and approximation error.
Assuming the relevance $R_j^l$ is proportional to the function activation $f_j(x)$, we can multiply each term in the expansion by a constant ratio $\frac{R_j^l}{f_j(x)}$ to obtain a recursion for relevance:

\begin{equation*}
     R_j^l = \sum_i \underbrace{\frac{\partial f_j}{\partial x_i}\Big|_{x_0} x_i \frac{R_j^l}{f_j(x)}}_{R_{i\leftarrow j}} + \underbrace{\tilde{b}_j \frac{R_j^l}{f_j(x)}}_{R_{b\leftarrow j}}
\end{equation*}

\section{Algorithms}

\iftoggle{arxiv}{}{}

\begin{algorithm}[!h]
    \small
  \caption{Computation Graph Pre-Processing}
  \label{alg:graph-construction}
  \begin{algorithmic}[1]
    \State \textbf{Input:} Model output \texttt{hidden\_states}
    \State Initialize $\texttt{in\_adj} \gets \emptyset$, $\texttt{out\_adj} \gets \emptyset$
    \State Initialize $\texttt{visited} \gets \emptyset$, $\texttt{topo\_stack} \gets \emptyset$
    \State $\texttt{root} \gets \texttt{hidden\_states.grad\_fn}$
    \State \Call{DFS}{\texttt{root, in\_adj, out\_adj, visited, topo\_stack}}
    \State \Return \texttt{in\_adj, out\_adj, topo\_stack} \Comment{In-adjacency list, Out-adjacency list, Topologically sorted nodes}
    \\
    \Function{DFS}{\texttt{node, in\_adj, out\_adj, visited, topo\_stack}}
      \If{\texttt{node} = \text{None} or \texttt{node} in \texttt{visited}} \Comment{Skip processed}
        \State \Return
      \EndIf
      \State \texttt{visited.add(node)} \Comment{Mark visited}
      \For{each \texttt{child} in \texttt{node.next\_functions}}
        \State \texttt{out\_adj[node].append(child)} \Comment{Record dependencies}
        \State \texttt{in\_adj[child].append(node)}
        \State \Call{DFS}{\texttt{child, in\_adj, out\_adj, visited, topo\_stack}} \Comment{Recurse}
      \EndFor
      \State \texttt{topo\_stack.push(node)} \Comment{Add to sort}
    \EndFunction
  \end{algorithmic}
\end{algorithm}

We extend the standard propagation functions to handle Promise inputs. Depending on whether the current node is an Arg Node, the function either records the chain or resolves and completes the Promise.

\begin{algorithm}[!h]
    \small
  \caption{Non-Arg Node Propagation Function Promise Handling}
  \label{alg:non-argnode-propagation}
  \begin{algorithmic}[1]
    \State \textbf{Input:} Autograd Node \texttt{node}, Propagation input $R_{out}$
    \State \textbf{Output:} Propagation output list $R_{in}$

    \If{$R_{out}$ is a Promise Branch} \Comment{Record chain if exploring}
      \State Define \texttt{fwd} as a closure of \texttt{node}'s forward pass.
      \State Define \texttt{bwd} as a closure of \texttt{node}'s relevance distribution logic.
      \State $R_{out}\texttt{.record(fwd, bwd)}$
      \State \Return $R_{out}$
    \EndIf

    \State \textit{// Propagate $R_{out}$...} \Comment{Standard LRP}
    \State \Return $R_{in}$
  \end{algorithmic}
\end{algorithm}

\begin{algorithm}[!h]
    \small
  \caption{Arg Node Propagation Function Promise Handling}
  \label{alg:argnode-propagation}
  \begin{algorithmic}[1]
    \State \textbf{Input:} Autograd Node \texttt{node}, Propagation input $R_{out}$
    \State \textbf{Output:} Propagation output list $R_{in}$

    \If{$R_{out}$ is a Promise Branch} \Comment{Retrieve forward activation}
      \State Define \texttt{retrieve\_fwd\_output} as a function that extracts \texttt{node}'s forward pass output given its saved tensors.
      \State \texttt{activation = retrieve\_fwd\_output(node)}
      \State $R_{out}\texttt{.set\_arg(activation)}$
      \State $R_{out}\texttt{.trigger\_promise\_completion()}$
      \If{$R_{out}\texttt{.promise\_is\_complete}$} \Comment{Resume propagation}
        \State $R_{out} = R_{out}\texttt{.propagated\_relevance}$
      \Else
        \State \textit{// Signal for queueing this Node until the Promise is complete}
        \State \Return
      \EndIf
    \EndIf

    \State \textit{// Propagate $R_{out}$...}
    \State \Return $R_{in}$
  \end{algorithmic}
\end{algorithm}

\newpage
\textbf{Explicit Steps in the Algorithm}
\begin{enumerate}
  \item \textbf{Forward Pass:} Get the model output and computation graph built by the auto-differentiation library.
  \item \textbf{Graph Augmentation Pass:} Attach any additional required metadata for LRP to each node in the computation graph. This includes in-neighbours, topological ordering, and computation flags. See Algorithm~\ref{alg:graph-construction}.
  \item \textbf{Backward Pass:} Starting from the output, traverse the graph using topological traversal. A node is traversed when all output relevance is received, or when it is part of a Promise's resolution path, and marked as propagated when it distributes relevance to its inputs.
  \item \textbf{Relevance Aggregation:} Input nodes collect relevance from all incoming promises, yielding the final attribution scores.
\end{enumerate}

\textbf{Readiness and Completeness}
\begin{itemize}
  \item \textbf{Readiness:} A promise is ready when all output relevance (from downstream nodes) has been received.
  \item \textbf{Completeness:} A promise is complete when it has distributed its relevance to all inputs according to its propagation rule.
\end{itemize}

This mechanism ensures that relevance is propagated in accordance with the mathematical definition of LRP, maintaining conservation and proportionality at each node.

\section{Promise Examples}
\subsection{Basic Promise Mechanism}
\label{appendix:basic-promise-example}

In this section, we will provide a complete walk-through for the example provided in Section~\ref{sec:promise-system}. For each node $v$, consider $fwd_v$ to be a closure of the node's forward operation, and $bwd_v$ to be a closure of
its backward relevance propagation function. Let $y_E$ be the forward output at node E. Then, we can reconstruct the forward input at node A, $x_A$, by computing $x_A = fwd_B(fwd_C(fwd_D(y_E)))$. Let $R_A$ be the accumulated relevance at A from its in-neighbours. When we have recomputed $x_A$, we can then compute
$R_{B \leftarrow A} = bwd_A(R_A, x_A)$. And similarly, we compute $R_{E \leftarrow D} = bwd_D(bwd_C(bwd_B(R_{B \leftarrow A})))$.

We accomplish this during traversal with the use of Promises, enabling a lazy evaluation of these values. We colour nodes red upon traversing them. We also indicate the location of the traversal pointer with $\uparrow$.

\begin{center}
\resizebox{0.5\linewidth}{!}{%
\begin{tikzpicture}
  \node[node, red, label=below:{$\uparrow$}] (node1) {A};
  \node[node] (node2) [right=of node1] {B};
  \node[node] (node3) [right=of node2] {C};
  \node[node] (node4) [right=of node3] {D};
  \node[node, orange] (node5) [right=of node4] {E};

  \draw[->] (node1.east) -- (node2.west);
  \draw[->] (node2.east) -- (node3.west);
  \draw[->] (node3.east) -- (node4.west);
  \draw[->] (node4.east) -- (node5.west);
\end{tikzpicture}
}
\end{center}

We encounter node A, which we find requires a Promise. We instantiate the Promise metadata object $P$ and propagate its Branch $b_P$ to its out-neighbour, and continue traversal.

\begin{center}
\resizebox{0.5\linewidth}{!}{%
\begin{tikzpicture}
  \node[node, red] (node1) {A};
  \node[node, red] (node2) [right=of node1] {B};
  \node[align=center,anchor=north] (lab) at (node2.south) {$\uparrow$\\$b_P$ stores $fwd_B$, $bwd_B$};
  \node[node] (node3) [right=of node2] {C};
  \node[node] (node4) [right=of node3] {D};
  \node[node, orange] (node5) [right=of node4] {E};

  \draw[->] (node1.east) -- (node2.west) node[midway, above] {$b_P$};
  \draw[->] (node2.east) -- (node3.west);
  \draw[->] (node3.east) -- (node4.west);
  \draw[->] (node4.east) -- (node5.west);
\end{tikzpicture}}
\end{center}

Upon reaching node B, since it is not an Arg Node, $b_P$ will record $fwd_B$ and $bwd_B$, then will be passed on to B's out-neighbour.
It will continue and do this for C and D.

\begin{center}
\resizebox{0.5\linewidth}{!}{%
\begin{tikzpicture}
  \node[node, red] (node1) {A};
  \node[node, red] (node2) [right=of node1] {B};
  \node[node, red] (node3) [right=of node2] {C};
  \node[align=center,anchor=north] (lab) at (node3.south) {$\uparrow$\\$b_P$ stores $fwd_C$, $bwd_C$};
  \node[node] (node4) [right=of node3] {D};
  \node[node, orange] (node5) [right=of node4] {E};

  \draw[->] (node1.east) -- (node2.west) node[midway, above] {$b_P$};
  \draw[->] (node2.east) -- (node3.west) node[midway, above] {$b_P$};
  \draw[->] (node3.east) -- (node4.west);
  \draw[->] (node4.east) -- (node5.west);
\end{tikzpicture}
}
\end{center}

\begin{center}
\resizebox{0.5\linewidth}{!}{%
\begin{tikzpicture}
  \node[node, red] (node1) {A};
  \node[node, red] (node2) [right=of node1] {B};
  \node[node, red] (node3) [right=of node2] {C};
  \node[node, red] (node4) [right=of node3] {D};
  \node[align=center,anchor=north] (lab) at (node4.south) {$\uparrow$\\$b_P$ stores $fwd_D$, $bwd_D$};
  \node[node, orange] (node5) [right=of node4] {E};

  \draw[->] (node1.east) -- (node2.west) node[midway, above] {$b_P$};
  \draw[->] (node2.east) -- (node3.west) node[midway, above] {$b_P$};
  \draw[->] (node3.east) -- (node4.west) node[midway, above] {$b_P$};
  \draw[->] (node4.east) -- (node5.west);
\end{tikzpicture}
}
\end{center}

Until, finally, $b_P$ encounters node E, which is an Arg Node, and we are able to retrieve the forward pass output of E, $y_E$.

\begin{center}
\resizebox{0.5\linewidth}{!}{%
\begin{tikzpicture}
  \node[node, red] (node1) {A};
  \node[node, red] (node2) [right=of node1] {B};
  \node[node, red] (node3) [right=of node2] {C};
  \node[node, red] (node4) [right=of node3] {D};
  \node[node, orange] (node5) [right=of node4] {E};
  \node[align=center,anchor=north] (lab) at (node5.south) {$\uparrow$\\retrieve $y_E$};

  \draw[->] (node1.east) -- (node2.west) node[midway, above] {$b_P$};
  \draw[->] (node2.east) -- (node3.west) node[midway, above] {$b_P$};
  \draw[->] (node3.east) -- (node4.west) node[midway, above] {$b_P$};
  \draw[->] (node4.east) -- (node5.west) node[midway, above] {$b_P$};
\end{tikzpicture}
}
\end{center}

We then:
\begin{itemize}
  \item Use the stored $fwd$ closures from B, C, D to compute $x_A = fwd_B(fwd_C(fwd_D(y_E)))$, and store it within the Promise metadata object $P$.
  \item Check in $P$ if the Promise is now in Ready state (all branches have found an Arg Node), in this case there is only the one branch.
  \item If Ready, and the Promise has no incomplete parent Promises (explained the next section), compute $R_{E \leftarrow A} = bwd_D(bwd_C(bwd_B(bwd_A(R_A, x_A))))$.
\end{itemize}

%\vspace{0.7cm}
\begin{center}
\resizebox{0.65\linewidth}{!}{%
\begin{tikzpicture}
  \node[node, red] (node1) {A};
  \node[node, red] (node2) [right=of node1] {B};
  \node[node, red] (node3) [right=of node2] {C};
  \node[node, red] (node4) [right=of node3] {D};
  \node[node, orange] (node5) [right=of node4] {E};
  \node[align=center,anchor=north] (lab) at (node5.south) {$\uparrow$\\retrieve $y_E$};
  \node[align=center,anchor=north, yshift=-1.1cm, xshift=-1.3cm] (label1) at (node1.south) {Store $x_A$ in $P$};

  \draw[->] (node1.east) -- (node2.west) node[midway, above] {$b_P$};
  \draw[->] (node2.east) -- (node3.west) node[midway, above] {$b_P$};
  \draw[->] (node3.east) -- (node4.west) node[midway, above] {$b_P$};
  \draw[->] (node4.east) -- (node5.west) node[midway, above] {$b_P$};
  \begin{scope}[transform canvas={yshift=-1.8cm}]
    \draw[->, shorten <= -0.5cm, shorten >= -0.5cm] (node5.north) -- (node1.north) node[midway, above] {Compute $x_A$};
  \end{scope}
  \begin{scope}[transform canvas={yshift=-2.4cm}]
    \draw[->, shorten <= -0.5cm, shorten >= -0.5cm] (node1.north) -- (node5.north) node[midway, below] {Compute $R_{E \leftarrow A}$};
  \end{scope}
\end{tikzpicture}
}
\end{center}
%\vspace{0.3cm}

And now, we have lazily computed the propagations of the path starting from A and ending at E, while keeping the traversal pointer from
backtracking any previously visited nodes. Blue indicates relevance propagation is complete at that node.

\begin{center}
\resizebox{0.65\linewidth}{!}{%
\begin{tikzpicture}
  \node[node, blue] (node1) {A};
  \node[node, blue] (node2) [right=of node1] {B};
  \node[node, blue] (node3) [right=of node2] {C};
  \node[node, blue] (node4) [right=of node3] {D};
  \node[node, orange] (node5) [right=of node4] {E};
  \node[align=center,anchor=north] (lab) at (node5.south) {$\uparrow$\\Continue propagation with $R_{E \leftarrow A}$};

  \draw[->] (node1.east) -- (node2.west);
  \draw[->] (node2.east) -- (node3.west);
  \draw[->] (node3.east) -- (node4.west);
  \draw[->] (node4.east) -- (node5.west);
\end{tikzpicture}
}
\end{center}

\subsection{Promise Trees and Resolution}
\label{appendix:promise-trees}
A Promise Tree is created when a promise-generating operation receives a Promise as relevance input. The new Promise specific to that operation is created and linked to the input Promise with parent-child connections.

\begin{center}
%\resizebox{0.33\linewidth}{!}{%
\begin{tikzpicture}
  \node[node, purple] (node1) {A};
  \node[node, purple] (node2) [below left=of node1] {B};
  \node[node, orange] (node3) [below right=of node1] {C};
  \node[node, orange] (node4) [below left=of node2, xshift=0.6cm] {D};
  \node[node, orange] (node5) [below right=of node2, xshift=-0.6cm] {E};

  \draw[dashed, ->] (node1.south west) -- (node2.north east);
  \draw[dashed, ->] (node1.south east) -- (node3.north west);
  \draw[dashed, ->] (node2.south west) -- (node4.north);
  \draw[dashed, ->] (node2.south east) -- (node5.north);
\end{tikzpicture}
%}
\end{center}

In this example, A and B are promise-generating operations, marked purple, and D, E, C are Arg Nodes, marked orange. We also use "->" to indicate the node at which the traversal pointer is pointing at each step. We mark nodes as red when we have traversed them, blue when we have propagated relevance through them.
It is important to note the difference between these two states due to the delaying of propagation through Promises.
Node A is traversed first, and propagates Promise Branches $b_{A,1}$ and $b_{A,2}$ through to its out-neighbours.

\begin{minipage}{.5\textwidth}
\begin{center}

\vspace{0.3cm}
Computation Graph

\vspace{0.3cm}
\begin{tikzpicture}
  \node[node, red, label=left:{->}] (node1) {A};
  \node[node, purple] (node2) [below left=of node1] {B};
  \node[node, orange] (node3) [below right=of node1] {C};
  \node[node, orange] (node4) [below left=of node2, xshift=0.6cm] {D};
  \node[node, orange] (node5) [below right=of node2, xshift=-0.6cm] {E};

  \draw[dashed, ->] (node1.south west) -- (node2.north east) node[midway, above, xshift=-0.1cm] {$b_{A,1}$};
  \draw[dashed, ->] (node1.south east) -- (node3.north west) node[midway, above, xshift=0.1cm] {$b_{A,2}$};
  \draw[dashed, ->] (node2.south west) -- (node4.north);
  \draw[dashed, ->] (node2.south east) -- (node5.north);

\end{tikzpicture}
\end{center}
\end{minipage}
\begin{minipage}{.5\textwidth}
\begin{center}

%\vspace{0.3cm}
Promise Tree

\vspace{0.3cm}
\resizebox{0.33\linewidth}{!}{%
\begin{tikzpicture}
  \node[diamondnode, purple] (node1) {$P_A$};
  \node[node] (node2) [yshift=-1.0cm, xshift=-0.6cm] {$b_{A,1}$};
  \node[node] (node3) [yshift=-1.0cm, xshift=0.6cm] {$b_{A,2}$};

  \draw (node1.south west) -- (node2.north);
  \draw (node1.south east) -- (node3.north);
\end{tikzpicture}
}
\end{center}
\end{minipage}

We traverse to node B now, and find that it receives Promise Branch $b_{A,1}$. But B is a promise-generating operation, so a new Promise $P_B$ is created.
$P_B$ is added as a child of the branch $b_{A,1}$ and $b_{A,1}$ is added as a parent to $P_B$, creating a Promise Tree. B then propagates the branches of $P_B$, $b_{B,1}$ and $b_{B,2}$ to its out-neighbours.

\begin{minipage}{.5\textwidth}
\begin{center}

\vspace{0.3cm}
Computation Graph

\vspace{0.3cm}
\begin{tikzpicture}
  \node[node, red] (node1) {A};
  \node[node, red, label=left:{->}] (node2) [below left=of node1] {B};
  \node[node, orange] (node3) [below right=of node1] {C};
  \node[node, orange] (node4) [below left=of node2, xshift=0.6cm] {D};
  \node[node, orange] (node5) [below right=of node2, xshift=-0.6cm] {E};

  \draw[dashed, ->] (node1.south west) -- (node2.north east) node[midway, above, xshift=-0.1cm] {$b_{A,1}$};
  \draw[dashed, ->] (node1.south east) -- (node3.north west) node[midway, above, xshift=0.1cm] {$b_{A,2}$};
  \draw[dashed, ->] (node2.south west) -- (node4.north) node[midway, above, xshift=-0.1cm] {$b_{B,1}$};
  \draw[dashed, ->] (node2.south east) -- (node5.north) node[midway, above, xshift=0.1cm] {$b_{B,2}$};

\end{tikzpicture}
\end{center}
\end{minipage}
\begin{minipage}{.5\textwidth}
\begin{center}

\vspace{0.3cm}
Promise Tree

\vspace{0.3cm}
\resizebox{0.33\linewidth}{!}{%
\begin{tikzpicture}
  \node[diamondnode, purple] (node1) {$P_A$};
  \node[node] (node2) [yshift=-1.0cm, xshift=-0.6cm] {$b_{A,1}$};
  \node[node] (node3) [yshift=-1.0cm, xshift=0.6cm] {$b_{A,2}$};
  \node[diamondnode, purple] (node4) [below=of node2, yshift=0.3cm] {$P_B$};
  \node[node] (node5) [below left=of node4, yshift=0.62cm, xshift=1.0cm] {$b_{B,1}$};
  \node[node] (node6) [below right=of node4, yshift=0.62cm, xshift=-1.0cm] {$b_{B,2}$};

  \draw (node1.south west) -- (node2.north);
  \draw (node1.south east) -- (node3.north);
  \draw[<->] (node2.south) -- (node4.north);
  \draw (node4.south west) -- (node5.north);
  \draw (node4.south east) -- (node6.north);
\end{tikzpicture}
}
\end{center}
\end{minipage}

Assume that D, E, C are the first Arg Nodes encountered by branches $b_{B,1}$, $b_{B,2}$, and $b_{A,2}$, respectively, and that they are traversed in such order.
Traversing D causes $b_{B,1}$ to forward chain the activation upwards. We colour Promise Tree objects teal when forward chaining has occurred through them.
Since $P_B$ is still not in Complete state after this step, we enqueue D and stall relevance propagation along its path.

\begin{minipage}{.5\textwidth}
\begin{center}

\vspace{0.3cm}
Computation Graph

\vspace{0.3cm}
\begin{tikzpicture}
  \node[node, red] (node1) {A};
  \node[node, red] (node2) [below left=of node1] {B};
  \node[node, orange] (node3) [below right=of node1] {C};
  \node[node, red, label=below:{\small stalled}, label=left:{->}] (node4) [below left=of node2, xshift=0.6cm] {D};
  \node[node, orange] (node5) [below right=of node2, xshift=-0.6cm] {E};

  \draw[dashed, ->] (node1.south west) -- (node2.north east) node[midway, above, xshift=-0.1cm] {$b_{A,1}$};
  \draw[dashed, ->] (node1.south east) -- (node3.north west) node[midway, above, xshift=0.1cm] {$b_{A,2}$};
  \draw[dashed, ->] (node2.south west) -- (node4.north) node[midway, above, xshift=-0.1cm] {$b_{B,1}$};
  \draw[dashed, ->] (node2.south east) -- (node5.north) node[midway, above, xshift=0.1cm] {$b_{B,2}$};

\end{tikzpicture}
\end{center}
\end{minipage}
\begin{minipage}{.5\textwidth}
\begin{center}

\vspace{0.3cm}
Promise Tree

\vspace{0.3cm}
\resizebox{0.33\linewidth}{!}{%
\begin{tikzpicture}
  \node[diamondnode, purple] (node1) {$P_A$};
  \node[node] (node2) [yshift=-1.0cm, xshift=-0.6cm] {$b_{A,1}$};
  \node[node] (node3) [yshift=-1.0cm, xshift=0.6cm] {$b_{A,2}$};
  \node[diamondnode, purple] (node4) [below=of node2, yshift=0.3cm] {$P_B$};
  \node[node, teal] (node5) [below left=of node4, yshift=0.62cm, xshift=1.0cm] {$b_{B,1}$};
  \node[node] (node6) [below right=of node4, yshift=0.62cm, xshift=-1.0cm] {$b_{B,2}$};

  \draw (node1.south west) -- (node2.north);
  \draw (node1.south east) -- (node3.north);
  \draw[<->] (node2.south) -- (node4.north);
  \draw (node4.south west) -- (node5.north);
  \draw (node4.south east) -- (node6.north);
\end{tikzpicture}
}
\end{center}
\end{minipage}

Likewise, once E is traversed, it will provide its forward output to $b_{B,2}$ to forward chain upwards. Since all branches of $P_B$ will have
forwarded their arguments, $P_B$ is now in Ready state.

\begin{minipage}{.5\textwidth}
\begin{center}

\vspace{0.3cm}
Computation Graph

\vspace{0.3cm}
\begin{tikzpicture}
  \node[node, red] (node1) {A};
  \node[node, red] (node2) [below left=of node1] {B};
  \node[node, orange] (node3) [below right=of node1] {C};
  \node[node, red, label=below:{\small stalled}] (node4) [below left=of node2, xshift=0.6cm] {D};
  \node[node, red, label=left:{->}] (node5) [below right=of node2, xshift=-0.6cm] {E};

  \draw[dashed, ->] (node1.south west) -- (node2.north east) node[midway, above, xshift=-0.1cm] {$b_{A,1}$};
  \draw[dashed, ->] (node1.south east) -- (node3.north west) node[midway, above, xshift=0.1cm] {$b_{A,2}$};
  \draw[dashed, ->] (node2.south west) -- (node4.north) node[midway, above, xshift=-0.1cm] {$b_{B,1}$};
  \draw[dashed, ->] (node2.south east) -- (node5.north) node[midway, above, xshift=0.1cm] {$b_{B,2}$};

\end{tikzpicture}
\end{center}
\end{minipage}
\begin{minipage}{.5\textwidth}
\begin{center}

%\vspace{0.3cm}
Promise Tree

\vspace{0.3cm}
\resizebox{0.33\linewidth}{!}{%
\begin{tikzpicture}
  \node[diamondnode, purple] (node1) {$P_A$};
  \node[node] (node2) [yshift=-1.0cm, xshift=-0.6cm] {$b_{A,1}$};
  \node[node] (node3) [yshift=-1.0cm, xshift=0.6cm] {$b_{A,2}$};
  \node[diamondnode, teal] (node4) [below=of node2, yshift=0.3cm] {$P_B$};
  \node[node, teal] (node5) [below left=of node4, yshift=0.62cm, xshift=1.0cm] {$b_{B,1}$};
  \node[node, teal] (node6) [below right=of node4, yshift=0.62cm, xshift=-1.0cm] {$b_{B,2}$};

  \draw (node1.south west) -- (node2.north);
  \draw (node1.south east) -- (node3.north);
  \draw[<->] (node2.south) -- (node4.north);
  \draw (node4.south west) -- (node5.north);
  \draw (node4.south east) -- (node6.north);
\end{tikzpicture}
}
\end{center}
\end{minipage}

When a Promise becomes Ready, this triggers the first half of the Promise Tree resolution at that Promise. It will apply its characteristic
operation using the results materialized by its branches' forward chains as inputs, and pass this output to its parents in the Promise Tree by
calling \texttt{parent.setarg(self.op\_result)}.

However, after this step, $P_B$ is still not in Complete state, so we must also enqueue E and stall relevance propagation along this path.

\begin{minipage}{.5\textwidth}
\begin{center}

\vspace{0.3cm}
Computation Graph

\vspace{0.3cm}
\begin{tikzpicture}
  \node[node, red] (node1) {A};
  \node[node, red] (node2) [below left=of node1] {B};
  \node[node, orange] (node3) [below right=of node1] {C};
  \node[node, red, label=below:{\small stalled}] (node4) [below left=of node2, xshift=0.6cm] {D};
  \node[node, red, label=below:{\small stalled}] (node5) [below right=of node2, xshift=-0.6cm] {E};

  \draw[dashed, ->] (node1.south west) -- (node2.north east) node[midway, above, xshift=-0.1cm] {$b_{A,1}$};
  \draw[dashed, ->] (node1.south east) -- (node3.north west) node[midway, above, xshift=0.1cm] {$b_{A,2}$};
  \draw[dashed, ->] (node2.south west) -- (node4.north) node[midway, above, xshift=-0.1cm] {$b_{B,1}$};
  \draw[dashed, ->] (node2.south east) -- (node5.north) node[midway, above, xshift=0.1cm] {$b_{B,2}$};

\end{tikzpicture}
\end{center}
\end{minipage}
\begin{minipage}{.5\textwidth}
\begin{center}

%\vspace{0.3cm}
Promise Tree

\vspace{0.3cm}
\resizebox{0.33\linewidth}{!}{%
\begin{tikzpicture}
  \node[diamondnode, purple] (node1) {$P_A$};
  \node[node, teal] (node2) [yshift=-1.0cm, xshift=-0.6cm] {$b_{A,1}$};
  \node[node] (node3) [yshift=-1.0cm, xshift=0.6cm] {$b_{A,2}$};
  \node[diamondnode, teal] (node4) [below=of node2, yshift=0.3cm] {$P_B$};
  \node[node, teal] (node5) [below left=of node4, yshift=0.62cm, xshift=1.0cm] {$b_{B,1}$};
  \node[node, teal] (node6) [below right=of node4, yshift=0.62cm, xshift=-1.0cm] {$b_{B,2}$};

  \draw (node1.south west) -- (node2.north);
  \draw (node1.south east) -- (node3.north);
  \draw[<->] (node2.south) -- (node4.north);
  \draw (node4.south west) -- (node5.north);
  \draw (node4.south east) -- (node6.north);
\end{tikzpicture}
}
\end{center}
\end{minipage}

Now, when C is traversed, the pattern repeats. $b_{A,2}$ will forward-chain the activation from C, and it will cause $P_A$ to reach Ready state.

\begin{minipage}{.5\textwidth}
\begin{center}

\vspace{0.3cm}
Computation Graph

\vspace{0.3cm}
\begin{tikzpicture}
  \node[node, red] (node1) {A};
  \node[node, red] (node2) [below left=of node1] {B};
  \node[node, red, label=left:{->}] (node3) [below right=of node1] {C};
  \node[node, red, label=below:{\small stalled}] (node4) [below left=of node2, xshift=0.6cm] {D};
  \node[node, red, label=below:{\small stalled}] (node5) [below right=of node2, xshift=-0.6cm] {E};

  \draw[dashed, ->] (node1.south west) -- (node2.north east) node[midway, above, xshift=-0.1cm] {$b_{A,1}$};
  \draw[dashed, ->] (node1.south east) -- (node3.north west) node[midway, above, xshift=0.1cm] {$b_{A,2}$};
  \draw[dashed, ->] (node2.south west) -- (node4.north) node[midway, above, xshift=-0.1cm] {$b_{B,1}$};
  \draw[dashed, ->] (node2.south east) -- (node5.north) node[midway, above, xshift=0.1cm] {$b_{B,2}$};

\end{tikzpicture}
\end{center}
\end{minipage}
\begin{minipage}{.5\textwidth}
\begin{center}

%\vspace{0.3cm}
Promise Tree

\vspace{0.3cm}
\resizebox{0.33\linewidth}{!}{%
\begin{tikzpicture}
  \node[diamondnode, teal] (node1) {$P_A$};
  \node[node, teal] (node2) [yshift=-1.0cm, xshift=-0.6cm] {$b_{A,1}$};
  \node[node, teal] (node3) [yshift=-1.0cm, xshift=0.6cm] {$b_{A,2}$};
  \node[diamondnode, teal] (node4) [below=of node2, yshift=0.3cm] {$P_B$};
  \node[node, teal] (node5) [below left=of node4, yshift=0.62cm, xshift=1.0cm] {$b_{B,1}$};
  \node[node, teal] (node6) [below right=of node4, yshift=0.62cm, xshift=-1.0cm] {$b_{B,2}$};

  \draw (node1.south west) -- (node2.north);
  \draw (node1.south east) -- (node3.north);
  \draw[<->] (node2.south) -- (node4.north);
  \draw (node4.south west) -- (node5.north);
  \draw (node4.south east) -- (node6.north);
\end{tikzpicture}
}
\end{center}
\end{minipage}

And since $P_A$ has no parents in the Promise Tree, it will now begin the backpropagation of relevance, now that all arguments have been resolved.
It will do this for all of its own branches using the backward chains stored within them, and recursively do so for the children of those branches.

We also colour nodes in the Promise Tree as blue when relevance has been propagated through them.

Since this all occurs in one step, we label the order in which propagation occurs at each node in both the Computation Graph and Promise Tree.

\begin{minipage}{.5\textwidth}
\begin{center}

\vspace{0.3cm}
Computation Graph

\vspace{0.3cm}
\begin{tikzpicture}
  \node[node, blue, label={\small 1}] (node1) {A};
  \node[node, blue, label=left:{\small 3}] (node2) [below left=of node1] {B};
  \node[node, red, label=left:{->}] (node3) [below right=of node1] {C};
  \node[node, red] (node4) [below left=of node2, xshift=0.6cm] {D};
  \node[node, red] (node5) [below right=of node2, xshift=-0.6cm] {E};

  \draw[dashed, ->] (node1.south west) -- (node2.north east) node[midway, above, xshift=-0.1cm] {$b_{A,1}$};
  \draw[dashed, ->] (node1.south east) -- (node3.north west) node[midway, above, xshift=0.1cm] {$b_{A,2}$};
  \draw[dashed, ->] (node2.south west) -- (node4.north) node[midway, above, xshift=-0.1cm] {$b_{B,1}$};
  \draw[dashed, ->] (node2.south east) -- (node5.north) node[midway, above, xshift=0.1cm] {$b_{B,2}$};

\end{tikzpicture}
\end{center}
\end{minipage}
\begin{minipage}{.5\textwidth}
\begin{center}

%\vspace{0.3cm}
Promise Tree

\vspace{0.3cm}
\resizebox{0.40\linewidth}{!}{%
\begin{tikzpicture}
  \node[diamondnode, blue, label={\small 1}] (node1) {$P_A$};
  \node[node, blue, label=left:{\small 2}] (node2) [yshift=-1.0cm, xshift=-0.6cm] {$b_{A,1}$};
  \node[node, blue, label=right:{\small 6}] (node3) [yshift=-1.0cm, xshift=0.6cm] {$b_{A,2}$};
  \node[diamondnode, blue, label=left:{\small 3}] (node4) [below=of node2, yshift=0.3cm] {$P_B$};
  \node[node, blue, label=left:{\small 4}] (node5) [below left=of node4, yshift=0.62cm, xshift=1.0cm] {$b_{B,1}$};
  \node[node, blue, label=right:{\small 5}] (node6) [below right=of node4, yshift=0.62cm, xshift=-1.0cm] {$b_{B,2}$};

  \draw (node1.south west) -- (node2.north);
  \draw (node1.south east) -- (node3.north);
  \draw[<->] (node2.south) -- (node4.north);
  \draw (node4.south west) -- (node5.north);
  \draw (node4.south east) -- (node6.north);
\end{tikzpicture}
}
\end{center}
\end{minipage}

Note that although the relevance has propagated through the Promise Branches $b_{A,2}$, $b_{B,1}$, and $b_{B,2}$, the propagation functions
of the nodes at the end of their chains, D, E, C, respectively, have not been called at this moment, thus we do not say that relevance
has been propagated through them yet. Their propagation will resume in further iterations using the relevances passed to them through
those branches.

And see that the traversal pointer has remained at C, so we simply continue propagating along its path.
Also, since $P_B$ is Complete, D and E will also be dequeued in the coming iterations to continue propagation.

\begin{minipage}{.5\textwidth}
\begin{center}

\vspace{0.3cm}
Computation Graph

\vspace{0.3cm}
\begin{tikzpicture}
  \node[node, blue] (node1) {A};
  \node[node, blue] (node2) [below left=of node1] {B};
  \node[node, blue, label=left:{->}] (node3) [below right=of node1] {C};
  \node[node, red, label=below:{queued}] (node4) [below left=of node2, xshift=0.6cm] {D};
  \node[node, red, label=below:{queued}] (node5) [below right=of node2, xshift=-0.6cm] {E};

  \draw[dashed, ->] (node1.south west) -- (node2.north east) node[midway, above, xshift=-0.1cm] {$b_{A,1}$};
  \draw[dashed, ->] (node1.south east) -- (node3.north west) node[midway, above, xshift=0.1cm] {$b_{A,2}$};
  \draw[dashed, ->] (node2.south west) -- (node4.north) node[midway, above, xshift=-0.1cm] {$b_{B,1}$};
  \draw[dashed, ->] (node2.south east) -- (node5.north) node[midway, above, xshift=0.1cm] {$b_{B,2}$};

\end{tikzpicture}
\end{center}
\end{minipage}
\begin{minipage}{.5\textwidth}
\begin{center}

\vspace{0.3cm}
Promise Tree

\vspace{0.3cm}
\resizebox{0.33\linewidth}{!}{%
\begin{tikzpicture}
  \node[diamondnode, blue] (node1) {$P_A$};
  \node[node, blue] (node2) [yshift=-1.0cm, xshift=-0.6cm] {$b_{A,1}$};
  \node[node, blue] (node3) [yshift=-1.0cm, xshift=0.6cm] {$b_{A,2}$};
  \node[diamondnode, blue] (node4) [below=of node2, yshift=0.3cm] {$P_B$};
  \node[node, blue] (node5) [below left=of node4, yshift=0.62cm, xshift=1.0cm] {$b_{B,1}$};
  \node[node, blue] (node6) [below right=of node4, yshift=0.62cm, xshift=-1.0cm] {$b_{B,2}$};

  \draw (node1.south west) -- (node2.north);
  \draw (node1.south east) -- (node3.north);
  \draw[<->] (node2.south) -- (node4.north);
  \draw (node4.south west) -- (node5.north);
  \draw (node4.south east) -- (node6.north);
\end{tikzpicture}
}
\end{center}
\end{minipage}

\begin{minipage}{.5\textwidth}
\begin{center}

\begin{tikzpicture}
  \node[node, blue] (node1) {A};
  \node[node, blue] (node2) [below left=of node1] {B};
  \node[node, blue] (node3) [below right=of node1] {C};
  \node[node, blue, label=left:{->}] (node4) [below left=of node2, xshift=0.6cm] {D};
  \node[node, red] (node5) [below right=of node2, xshift=-0.6cm] {E};

  \draw[dashed, ->] (node1.south west) -- (node2.north east) node[midway, above, xshift=-0.1cm] {$b_{A,1}$};
  \draw[dashed, ->] (node1.south east) -- (node3.north west) node[midway, above, xshift=0.1cm] {$b_{A,2}$};
  \draw[dashed, ->] (node2.south west) -- (node4.north) node[midway, above, xshift=-0.1cm] {$b_{B,1}$};
  \draw[dashed, ->] (node2.south east) -- (node5.north) node[midway, above, xshift=0.1cm] {$b_{B,2}$};

\end{tikzpicture}
\end{center}
\end{minipage}
\begin{minipage}{.5\textwidth}
\begin{center}

\resizebox{0.33\linewidth}{!}{%
\begin{tikzpicture}
  \node[diamondnode, blue] (node1) {$P_A$};
  \node[node, blue] (node2) [yshift=-1.0cm, xshift=-0.6cm] {$b_{A,1}$};
  \node[node, blue] (node3) [yshift=-1.0cm, xshift=0.6cm] {$b_{A,2}$};
  \node[diamondnode, blue] (node4) [below=of node2, yshift=0.3cm] {$P_B$};
  \node[node, blue] (node5) [below left=of node4, yshift=0.62cm, xshift=1.0cm] {$b_{B,1}$};
  \node[node, blue] (node6) [below right=of node4, yshift=0.62cm, xshift=-1.0cm] {$b_{B,2}$};

  \draw (node1.south west) -- (node2.north);
  \draw (node1.south east) -- (node3.north);
  \draw[<->] (node2.south) -- (node4.north);
  \draw (node4.south west) -- (node5.north);
  \draw (node4.south east) -- (node6.north);
\end{tikzpicture}
}
\end{center}
\end{minipage}

\begin{minipage}{.5\textwidth}
\begin{center}

\begin{tikzpicture}
  \node[node, blue] (node1) {A};
  \node[node, blue] (node2) [below left=of node1] {B};
  \node[node, blue] (node3) [below right=of node1] {C};
  \node[node, blue] (node4) [below left=of node2, xshift=0.6cm] {D};
  \node[node, blue, label=left:{->}] (node5) [below right=of node2, xshift=-0.6cm] {E};

  \draw[dashed, ->] (node1.south west) -- (node2.north east) node[midway, above, xshift=-0.1cm] {$b_{A,1}$};
  \draw[dashed, ->] (node1.south east) -- (node3.north west) node[midway, above, xshift=0.1cm] {$b_{A,2}$};
  \draw[dashed, ->] (node2.south west) -- (node4.north) node[midway, above, xshift=-0.1cm] {$b_{B,1}$};
  \draw[dashed, ->] (node2.south east) -- (node5.north) node[midway, above, xshift=0.1cm] {$b_{B,2}$};

\end{tikzpicture}
\end{center}
\end{minipage}
\begin{minipage}{.5\textwidth}
\begin{center}

\resizebox{0.33\linewidth}{!}{%
\begin{tikzpicture}
  \node[diamondnode, blue] (node1) {$P_A$};
  \node[node, blue] (node2) [yshift=-1.0cm, xshift=-0.6cm] {$b_{A,1}$};
  \node[node, blue] (node3) [yshift=-1.0cm, xshift=0.6cm] {$b_{A,2}$};
  \node[diamondnode, blue] (node4) [below=of node2, yshift=0.3cm] {$P_B$};
  \node[node, blue] (node5) [below left=of node4, yshift=0.62cm, xshift=1.0cm] {$b_{B,1}$};
  \node[node, blue] (node6) [below right=of node4, yshift=0.62cm, xshift=-1.0cm] {$b_{B,2}$};

  \draw (node1.south west) -- (node2.north);
  \draw (node1.south east) -- (node3.north);
  \draw[<->] (node2.south) -- (node4.north);
  \draw (node4.south west) -- (node5.north);
  \draw (node4.south east) -- (node6.north);
\end{tikzpicture}
}
\end{center}
\end{minipage}

\subsection{Solution to Promise Deadlock: Pre-Promises}
\label{appendix:pre-promises}

The solution to Promise Deadlock is to allow Promise Branches to "reach ahead" past the traversal frontier to locate their Arg Nodes, without triggering full relevance backpropagation.

When a node $v$ receives a Promise Branch input $p_1$ but is still waiting for other inputs (i.e., landed inputs < in-degree), we instantiate a \textbf{Pre-Promise}, which only has one branch $p_2$, at $v$
(since it only has one branch, we simply refer to the Pre-Promise as its branch $p_2$). This Pre-Promise:

\begin{enumerate}
    \item Has a parent connection to $p_1$ (enabling forward-chaining of activations)
    \item Does NOT register as a child of $p_1$ (preventing backward-chaining of relevance)
    \item Continues traversal to find its Arg Node
\end{enumerate}

The parent-child asymmetry creates the desired behaviour: $p_2$ can reach its Arg Node, forward-chain the activation back through its parent connection to satisfy $p_1$'s requirements, but relevance propagation is blocked at $v$ until the traversal heuristic naturally revisits it (when all inputs have landed).

Once $v$'s in-degree is satisfied later in the traversal:
\begin{itemize}
    \item The Pre-Promise $p_2$ is "promoted" by establishing its child connection to $p_1$
    \item If $p_1$'s Promise is now complete (all args resolved), backward-chaining proceeds
    \item The traversal continues normally from $p_2$'s Arg Node after backward-chaining.
\end{itemize}

To illustrate the function of Pre-Promises, we will pick up where the example in Section 4.3.3 left off. We have that D is trying to propagate a Promise Branch to E, but E does not
have all its inputs yet. Therefore, we create a Pre-Promise $p_3$, and allow traversal to continue at E, now giving $p_3$ as E's input from D.

\begin{center}
\resizebox{0.40\linewidth}{!}{%
\begin{tikzpicture}
%Nodes
\node[node, red] (node1) {A};
\node[node, red, label={\small stalled}] (node2) [right=of node1] {B};
\node[node] (node3) [right=of node2] {C};
\node[node, red] (node4) [below=of node3, xshift=-8mm] {D};
\node[node, red] (node5) [right=of node3] {E};

%Lines
\draw[->] (node1.east) -- (node2.west) node[midway, above] {$p_1$};
\draw[->] (node2.east) -- (node3.west);
\draw[->] (node3.east) -- (node5.west);
\draw[->] (node1.east) .. controls +(down:7mm) and +(left:7mm) .. (node4.west) node[midway, above] {$p_2$};
\draw[->] (node4.east) .. controls +(up:7mm) and +(left:7mm) .. (node5.west) node[midway, below] {$p_3$};
\end{tikzpicture}
}
\end{center}

$p_3$ retrieves and forward-chains the activation at E to $p_2$, which then forward-chains it to obtain A's missing activation. We now have that A, B, D, E have gone through
forward-chaining. We colour them as teal to signify this.

\begin{center}
\resizebox{0.40\linewidth}{!}{%
\begin{tikzpicture}
%Nodes
\node[node, teal] (node1) {A};
\node[node, teal, label={\small stalled}] (node2) [right=of node1] {B};
\node[node] (node3) [right=of node2] {C};
\node[node, teal] (node4) [below=of node3, xshift=-8mm] {D};
\node[node, teal] (node5) [right=of node3] {E};

%Lines
\draw[->] (node1.east) -- (node2.west) node[midway, above] {$p_1$};
\draw[->] (node2.east) -- (node3.west);
\draw[->] (node3.east) -- (node5.west);
\draw[->] (node1.east) .. controls +(down:7mm) and +(left:7mm) .. (node4.west) node[midway, above] {$p_2$};
\draw[->] (node4.east) .. controls +(up:7mm) and +(left:7mm) .. (node5.west) node[midway, below] {$p_3$};
\end{tikzpicture}
}
\end{center}

This is followed by $P$ triggering relevance backpropagation through all of its branches. Blue nodes signify that true relevance values have been propagated through them.
Crucially, E does not continue this propagation due to the lack of a child connection from $p_2$ to $p_3$.

\begin{center}
\resizebox{0.4\linewidth}{!}{%
\begin{tikzpicture}
%Nodes
\node[node, blue] (node1) {A};
\node[node, blue] (node2) [right=of node1] {B};
\node[node] (node3) [right=of node2] {C};
\node[node, blue] (node4) [below=of node3, xshift=-8mm] {D};
\node[node, teal] (node5) [right=of node3] {E};

%Lines
\draw[->] (node1.east) -- (node2.west) node[midway, above] {$p_1$};
\draw[->] (node2.east) -- (node3.west);
\draw[->] (node3.east) -- (node5.west);
\draw[->] (node1.east) .. controls +(down:7mm) and +(left:7mm) .. (node4.west) node[midway, above] {$p_2$};
\draw[->] (node4.east) .. controls +(up:7mm) and +(left:7mm) .. (node5.west) node[midway, below] {$p_3$};
\end{tikzpicture}
}
\end{center}

Now, C is able to be traversed.

\begin{center}
\resizebox{0.4\linewidth}{!}{%
\begin{tikzpicture}
%Nodes
\node[node, blue] (node1) {A};
\node[node, blue] (node2) [right=of node1] {B};
\node[node, red] (node3) [right=of node2] {C};
\node[node, blue] (node4) [below=of node3, xshift=-8mm] {D};
\node[node, teal] (node5) [right=of node3] {E};

%Lines
\draw[->] (node1.east) -- (node2.west) node[midway, above] {$p_1$};
\draw[->] (node2.east) -- (node3.west);
\draw[->] (node3.east) -- (node5.west);
\draw[->] (node1.east) .. controls +(down:7mm) and +(left:7mm) .. (node4.west) node[midway, above] {$p_2$};
\draw[->] (node4.east) .. controls +(up:7mm) and +(left:7mm) .. (node5.west) node[midway, below] {$p_3$};
\end{tikzpicture}
}
\end{center}

And we propagate through it to E, finally.

\begin{center}
\resizebox{0.40\linewidth}{!}{%
\begin{tikzpicture}
%Nodes
\node[node, blue] (node1) {A};
\node[node, blue] (node2) [right=of node1] {B};
\node[node, blue] (node3) [right=of node2] {C};
\node[node, blue] (node4) [below=of node3, xshift=-8mm] {D};
\node[node, red] (node5) [right=of node3] {E};

%Lines
\draw[->] (node1.east) -- (node2.west) node[midway, above] {$p_1$};
\draw[->] (node2.east) -- (node3.west);
\draw[->] (node3.east) -- (node5.west);
\draw[->] (node1.east) .. controls +(down:7mm) and +(left:7mm) .. (node4.west) node[midway, above] {$p_2$};
\draw[->] (node4.east) .. controls +(up:7mm) and +(left:7mm) .. (node5.west) node[midway, below] {$p_3$};
\end{tikzpicture}
}
\end{center}

\section{Theoretical Analysis}

\begin{proof}
  Recall from Sec~\ref{sec:computation-graph} that in a computation graph, the sink nodes are the inputs and parameters of the model. To compute the gradients of the loss w.r.t. some parameter at layer $l$, as is the goal of backpropagation, we require the values of the parameters at layer $l + 1$ to perform chain rule, assuming the parameters are for linear projection matrices.
  
  Therefore, as long as the model inputs are set to track gradients, every sink node of the DAG will contain a parameter or input, which are forward activations.
  
  If any node did not eventually lead to such a sink node, that would imply that none of the nodes along this path require gradient tracking. But then this path should not exist in the computation graph at all, as it would have no impact on gradient computation.

  So, all other nodes in the computation graph must be ancestors of some such sink node, and this statement holds true.
\end{proof}

\subsection{DynamicLRP Overhead Complexity}
We analyse the computational complexity of our promise-based LRP algorithm in terms of the architectural properties that determine promise resolution requirements.

Let $V_P \subseteq V$ be the set of promise-generating operations in the computation graph $G = (V, E)$.
At the time of writing, the set of promise-generating operations is:
$$V_P = \{v \in V : \text{type}(f_v) \in \{\text{Add}, \text{Sum}, \text{Cat}, \text{Unbind}, \text{Mean}, \text{Stack}\}\}$$

\begin{definition}%[Promise Depth]
For each promise-generating operation $v_p \in V_P$, the Promise Depth is:
$$d(v_p) = \max_{u \in outadj(v_p)} dist(u, argnode)$$
where for every $u$, $argnode$ is its nearest descendent operation that stores a forward activation required by $v_p$.
% The maximum promise depth is:
% $$D = \max_{v_p \in V_P} d(v_p)$$
\end{definition}
\vspace{0.3cm}
\begin{theorem}[Promise-Based LRP Complexity]
Let $G = (V, E)$ be the computation graph of a neural network with $n = |V|$ operations, $m = |E|$ edges, promise-generating set $V_P$, and maximum promise depth $D$.

Let $C_{fwd}, C_{bwd}$ be the most expensive forward and backward pass computation steps, respectively.

Let $S$ be the size of the largest activation cached in the computation graph.

Let $\delta = \sum_{v_p \in V_P} d(v_p)$

Then, the promise-based LRP algorithm has runtime complexity $O((C_{fwd} + C_{bwd}) \cdot (n + m))$, and memory overhead complexity $O(|V_P| \cdot S)$.
% The promise-based LRP algorithm requires $F = 1 + |P| \cdot D$ forward passes and $B = 1$ backward pass, giving total complexity $O((1 + |P| \cdot D) \cdot n + m)$.
\end{theorem}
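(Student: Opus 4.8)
The plan is to split the analysis into three independently accounted parts — the cost of the underlying topological traversal, the extra compute performed during Promise resolution, and the extra tensor storage held by live Promises — and to reduce the last two to the chain-disjointness property established in Appendix~\ref{appendix:unique-chains}. First I would record the two structural invariants everything rests on. From the traversal heuristic (Definition~\ref{def:traversal-heuristic}) together with the deadlock-resolution procedure of Appendix~\ref{appendix:pre-promises}, every node $v \in V$ is popped and processed exactly once, and every edge is examined only $O(1)$ times (once while decrementing $pending$ of its target, plus at most once during a forced Promise-branch propagation). From Appendix~\ref{appendix:unique-chains}, the sets of intermittent nodes on the forward/backward closure chains of distinct Promise branches are pairwise disjoint; hence the total length of all such chains — and in particular $\delta = \sum_{v_p \in V_P} d(v_p)$ — is at most $|V| = n$, which is why $\delta$ never surfaces in the stated bounds.

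For the runtime I would then tally work into three buckets. (1) \emph{Structural bookkeeping}: popping/pushing stack nodes and decrementing $pending$ over all out-edges is $O(n + m)$. (2) \emph{Direct relevance rules}: a node whose relevance resolves immediately applies its operation-specific rule once, at cost $\le C_{bwd}$ by definition of $C_{bwd}$, contributing $O(n \cdot C_{bwd})$ overall. (3) \emph{Deferred Promise computation}: by disjointness each node lies on at most one closure chain, so across the whole run there are at most $n$ forward-closure evaluations during activation reconstruction (each $\le C_{fwd}$) and at most $n$ backward-closure evaluations during the fast-forward (each $\le C_{bwd}$), for $O(n(C_{fwd} + C_{bwd}))$. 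Summing the buckets and using $C_{fwd} + C_{bwd} \ge 1$ gives $O((C_{fwd} + C_{bwd})(n + m))$. The crucial point to stress is that the disjointness lemma is exactly what turns ``keep traversing to locate Arg Nodes'' into amortized-linear work with no recomputation or re-traversal of any node or edge.

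For the memory overhead I would argue that, beyond the pre-existing computation graph, the only non-constant-size objects allocated are the Promise records: the stored closures merely capture references to activations already cached in $G$, costing $O(1)$ apiece, and by the first invariant each Promise-generating operation is processed once and spawns $O(1)$ Promise objects, so at most $O(|V_P|)$ Promises are ever created (nesting into Promise Trees does not change this count). Each Promise holds its deferred output relevance \texttt{rout} (an activation, size $O(S)$) and the reconstructed \texttt{args} (activations whose total size is bounded by that of the origin node's inputs, hence $O(S)$), giving $O(|V_P| \cdot S)$; the $O(n)$ pointer-sized closure captures and the $O(\delta) = O(n)$ chain metadata are lower order and, being references into $G$ rather than fresh tensors, are not part of the tensor-memory overhead being bounded.

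The main obstacle is making bucket (3) and the Promise-count claim airtight — proving that the forced propagation triggered by deadlock resolution never processes a node, edge, or closure twice, and never instantiates a second Promise at the same origin — which is precisely the content of Appendices~\ref{appendix:unique-chains} and~\ref{appendix:pre-promises}, so the bulk of the proof is an invocation of those lemmas. The remaining care is purely in the bookkeeping: checking that $d(v_p)$ and chain lengths are dominated by $n$, that a single Promise's tensor footprint is $O(S)$, and (a mild modelling assumption) that reconstructed, uncached activations inherit the same $O(S)$ size bound as the cached ones used to define $S$.
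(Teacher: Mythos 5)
Your proposal is correct and follows essentially the same route as the paper: both decompose the runtime into the base traversal/propagation cost plus the deferred Promise work, invoke the chain-disjointness property of Appendix~\ref{appendix:unique-chains} to bound $\delta \leq n$ and hence absorb Promise resolution into $O(C_{fwd}\cdot n)$, and bound memory by noting each Promise holds $O(1)$ tensors of size at most $S$. The only differences are cosmetic bookkeeping (you omit the explicit initial-forward-pass and graph-construction phases, and you are slightly more explicit about the backward-closure evaluations and the assumption that reconstructed activations obey the $O(S)$ bound), none of which changes the argument.
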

\label{sec:lrp-complexity}
\begin{proof}
First, we consider that the memory bound is straightforward, as each Promise stores a constant number of tensors representing the \texttt{rout}, \texttt{rins}, and \texttt{args}, and all tensors are bounded above by the largest activation in the model computation.

To prove the time complexity, we analyze the algorithm in three distinct phases:

\textbf{Phase 1 - Initial Forward Pass:} One forward pass is performed to generate the model output and construct the computation graph. This requires $O(C_{fwd} \cdot (n + m))$ time.

\textbf{Phase 2 - Auxiliary Graph Construction:} The algorithm constructs the auxiliary graph $G'$ from the output's computation graph $G$, requiring only graph traversal and no numerical computation. This requires $O(n + m)$ time.

\textbf{Phase 3a - Backward LRP Pass:} The algorithm performs exactly one backward pass through $G'$ to compute relevance propagation. This requires $O(C_{bwd} \cdot (n + m))$ time.

\textbf{Phase 3b - Promise Resolution:} When the LRP traversal encounters a promise-generating operation $v_p \in V_P$ that lacks its required forward activation, a promise object is created. Each promise must traverse backward through the graph until it locates the operation storing its required activation. 

Crucially, our implementation shows that promises never share the same internal nodes in their computation paths. Each promise maintains independent traversal paths (via distinct \texttt{fwd\_list}, \texttt{bwd\_list}, and \texttt{arg\_node\_ind} fields) and resolves separately through individual \texttt{setarg()} calls.
Therefore, each promise requires exactly $d(v_p)$ traversal steps, leading to $d(v_p) \cdot C_{fwd}$ additional forward computations in the worst case, for a total of $\delta = \sum_{v_p \in V_P} d(v_p)$ forward computations when considering all Promises.
Since all computation paths are mutually exclusive, we have that $\delta \leq n$, and so promise resolution overhead has time complexity $O(C_{fwd} \cdot \delta) \in O(C_{fwd} \cdot n)$.

Therefore, the total complexity is in $O((C_{fwd} + C_{bwd}) \cdot (n + m))$, the asymptotic class of a standard backward pass, and memory overhead complexity $O(\delta \cdot S)$, where both $\delta$ and $S$ depend on architectural choices, and $\delta$ is independent of input size.
\end{proof}

%\input{Proofs/Proof_PromiseComplexity}

%\iftoggle{arxiv}{
%\input{Proofs/Proof_PromiseComplexity}
%}{
%Please see Appendix~\ref{proof:promise-complexity} for the proof.
%}

\section{Input Aggregation}
When node $v$ receives multiple relevance inputs from in-neighbors, we aggregate before propagation:

\textbf{Tensor inputs} are typically summed element-wise: $R_v = \sum_{i=1}^k R_i$, but there are rare edge cases like when
propagating through a Split operation, where $R_v = concat(R_1 \ldots R_k)$, for example.

\textbf{Promise Branch inputs} $p_1, \ldots, p_k$ are merged into a single aggregation branch $p_{\text{agg}}$ with parent
connections to all $p_i$ but no child connections back. During forward-chaining, $p_{\text{agg}}$ distributes retrieved
activations to all parents. During backward-chaining, $p_{\text{agg}}$ sums relevances from all parents before continuing
propagation.

Notably, this kind of aggregation occurs naturally through the Pre-Promise mechanism. Consider that any such $v$ that
eventually receives more than one Promise Branch input, will at some point be in the exact state for requiring a
Pre-Promise ($|I| < \text{indegree}(v)$). This occurs right after $v$ receives its first Promise Branch input $p_1$.
As additional Promise Branches $p_2, \ldots, p_k$ arrive at $v$, they connect as parents to the Pre-Promise, and the
child connections are made when the Pre-Promise is promoted, completing the aggregation structure.

\textbf{Mixed Promise and Tensor inputs:} When $v$ receives both tensor relevances and Promise Branches, we handle Promise
aggregation as above. If only one Promise Branch is present, we still create a new aggregation branch $p_{\text{agg}}$ as
a child of the incoming branch. While this adds memory overhead, it maintains the Promise path abstraction required for
caching (\ref{sec:promise-caching}). If tensor relevances were allowed to merge directly into a Promise Branch's
internal nodes, cached Promise paths would behave incorrectly; relevance would be injected mid-chain rather than at the
Promise's Origin Node.

This process helps ensure that relevance is propagated only once through each node to prevent exponential blowup in graph
traversal and incorrect accumulation of relevance from multiple Promise Branches. This will help us prove Proposition 2:

\textbf{Proposition 2:} Each node in the computation graph has relevance propagated through it exactly once during LRP traversal.

\begin{proof}
  We prove this in two cases: standard propagation and Promise propagation.

  \textbf{Case 1: Standard Propagation.}
  By our traversal heuristic, and facilitated by the Input Aggregation process, we have that this statement will hold true in any case that
  follows the heuristic.

  By traversal heuristic, we process node $v$ only when all in-neighbors have propagated relevance to it. Since the
  computation graph is a DAG (no operation can depend on its own output), no traversal path starting from $v$ can return
  to $v$. 
  Input aggregation ensures that multiple incoming relevances are combined before processing, so $v$ is processed exactly
  once when all inputs arrive.

  \textbf{Case 2: Promise Propagation.}
  The two distinct phases of a Promise Branch are the forward chaining of activations when it reaches an Arg Node and the
  backward chaining of relevance, deferred until all parents in the Promise Tree are in Complete state.
  Therefore, even though the nodes within a Promise Branch path are traversed in the first phase, they do not have
  relevance propagated through them until the backward chain is executed, propagating all the way through to the Arg Node
  and skipping all internal nodes. Since the Arg Node is a descendent of $v$ and all internal nodes in the chain, by the
  DAG argument, none of those nodes can be revisited.

\end{proof}

\textbf{Claim:} The internal nodes of any two distinct Promise branches are disjoint.
\label{appendix:unique-chains}

\begin{proof}
  By Proposition 2 we have that no node has relevance propagated through it more than once in graph traversal.

  Suppose for contradiction that two distinct Promise branches $p_1$ and $p_2$ share an internal node $v$.
  However, this means that during traversal, both $p_1$ and $p_2$ had to traverse $v$, or else it would not be recorded in
  their chains. In that case, we would have applied Case 2 of Input Aggregation via the Pre-Promise mechanism, and both
  chains would then terminate at the creation of the Pre-Promise. But, this contradicts our premise that $v$ was an internal
  node for both branches, therefore it must be that no two distinct Promise branches share any internal nodes.
\end{proof}

\textbf{Remark.} This property ensures that the total number of nodes traversed by 
all Promises is bounded by $\delta = \sum_{v_p \in V_P} d(v_p) \leq |V|$, as stated 
in Theorem 1 (\ref{sec:lrp-complexity}). Each node contributes to at most one Promise's internal chain, preventing 
double-counting in complexity analysis.

\section{List of Covered Operations}
\label{appendix:covered-operations}
Below is the list of currently supported operations for DynamicLRP.
\begin{table}[!h]
\begin{center}
  \begin{tabular}{ll}
    \toprule
    \textbf{Category} & \textbf{Operations} \\
    \midrule
    \textbf{Arithmetic} & Add, Sub, Mul, Div, Neg, Sqrt, Rsqrt, Pow, Exp, Log \\
    \textbf{Linear Algebra} & Mm, Bmm, Convolution \\
    \textbf{Aggregation} & Sum, Mean, Cat, Stack, Unbind, Split, Max, Gather \\
    \textbf{Shape Manipulation} & View, Reshape, Transpose, Permute, Expand, Repeat, Squeeze, Unsqueeze \\
    \textbf{Indexing and Selection} & Slice, Index, Select \\
    \textbf{Pooling} & MaxPool2D, MaxPool3D, AdaptiveAvgPool2D \\
    \textbf{Normalization} & NativeLayerNorm, NativeBatchNorm \\
    \textbf{Attention} & SDPA \\
    \textbf{Activations} & GELU, SiLU, ReLU, Softmax \\
    \textbf{Other} & MaskedFill, Clone, ToCopy, NativeDropout, Embedding \\
    \bottomrule
  \end{tabular}
\end{center}
\end{table}

\section{Extra Results}
\label{sec:appendix_extra_results}
\begin{table}[!h]
\begin{center}
\scriptsize
  \caption{ABPC and attribution efficiency metrics for VGG experiments. 3800 examples from ImageNette (320px) validation set.}
  \label{vgg-table}
  \begin{tabular}{ lccccc }
    \toprule
    \textbf{Method} & \textbf{ABPC ($\uparrow$)} & \textbf{Comprehensiveness ($\uparrow$)} & \textbf{Sufficiency ($\downarrow$)} & \textbf{Speed (it/s, $\uparrow$)} & \textbf{Peak VRAM ($\downarrow$)} \\
    \midrule
    Saliency & 0.73 & 2.58 & 1.85 & \textbf{156.83} & \textbf{0.89GB} \\ 
    SmoothGrad & 1.50 & \textbf{2.87} & 1.37 & 8.90 & 0.90GB \\ 
    Random & -0.0022 & 2.40 & 2.40 & - & - \\
    Zennit LRP & 1.69 & 2.72 & 1.02 & 29.41 & 1.4GB \\
    \midrule
    Ours & \textbf{1.77} & 2.65 & \textbf{0.88} & 21.79 & 2.3GB \\
    \bottomrule
  \end{tabular}
\end{center}
\end{table}
\begin{table}[!tph]
\scriptsize
\begin{center}
  \caption{ABPC and attribution efficiency metrics for ViT-b-16 experiments. 1000 examples from CIFAR-10 test set.}
  \label{vit-table}
  \begin{tabular}{ lccccc }
    \toprule
    \textbf{Method} & \textbf{ABPC ($\uparrow$)} & \textbf{Comprehensiveness ($\uparrow$)} & \textbf{Sufficiency ($\downarrow$)} & \textbf{Speed (it/s, $\uparrow$)} & \textbf{Peak VRAM ($\downarrow$)} \\
    \midrule
    IG & 0.10 & 0.95 & 0.85 & N/A* & >12GB \\ 
    InputXGrad & -0.0030 & 0.90 & 0.90 & \textbf{59.89} & 2.1GB \\ 
    GradSHAP & 0.074 & 0.94 & 0.87 & 3.32 & 8.6GB \\
    Random & -0.023 & 0.88 & 0.90 & - & - \\
    AttnLRP & \textbf{1.46} & \textbf{1.78} & 0.33 & 6.02 & \textbf{0.96GB} \\
    \midrule
    Ours & \textbf{1.46} & \textbf{1.78} & \textbf{0.32} & 9.74 & 1.6GB \\
    \bottomrule
  \end{tabular}
\end{center}
\vspace{0.1cm}
*N/A in the Speed column signifies that GPU memory limits caused thrashing and prevented the observation of method's true speed.
\vspace{10pt}
\end{table}
\begin{figure}[!h]
  \centering
  \includegraphics[width=1.0\textwidth]{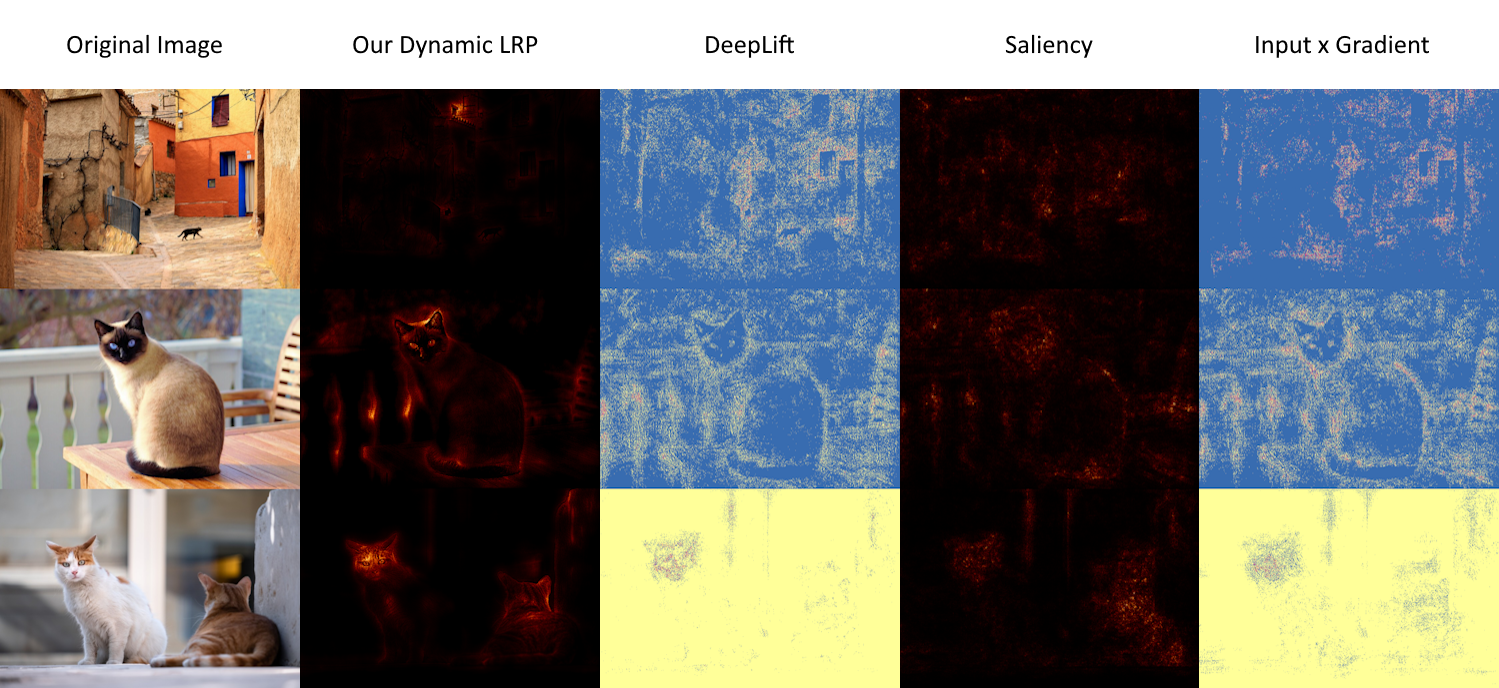}
  \caption{Comparing methods for VGG attributions.}
\end{figure}
\begin{table}[!bh]
\scriptsize
\begin{center}
  \caption{DynamicLRP Attribution faithfulness measured by top attributed token accuracy against model predictions and labels, and IoU with predictions with the entire SQuADv2 validation set (5928 examples). EM = Exact Match. Higher is better for all but the last two columns.
  Columns marked with (P) are w.r.t. model predictions, (L) are w.r.t. labels. Top-2 accuracy and IoU values in parentheses signify strict
  matching to only the predicted start/end indices, and none of the indices between them. Non-parenthesised values here allow hits on indices
  within the span. All remaining unaccounted examples in the totals were skipped due to exceeding the model max token length (512).}
  \label{squadv2-table}
  \begin{tabular}{ lcccccccc }
    \toprule
    \textbf{Model} & \textbf{EM} & \textbf{F1} & \textbf{top-1 (L)} & \textbf{top-1 (P)} & \textbf{top-2 (P)} & \textbf{IoU (P)} & \textbf{Examples} & \textbf{Skipped} \\
    \midrule
    RoBERTa-L & 82.29 & 86.27 & 93.70 & 97.47 & 96.22 (88.95) & 92.70 (80.09) & 5844 & 45 \\
    Flan-T5-L & 83.65 & 90.15 & 95.06 & 98.64 & 94.73 (85.64) & 89.94 (74.86) & 5830 & 36 \\
    \bottomrule
  \end{tabular}
\end{center}
\end{table}
\begin{figure}[!ht]
    \centering
    \begin{subfigure}[b]{0.8\textwidth}
        \centering
        \includegraphics[width=1.0\linewidth]{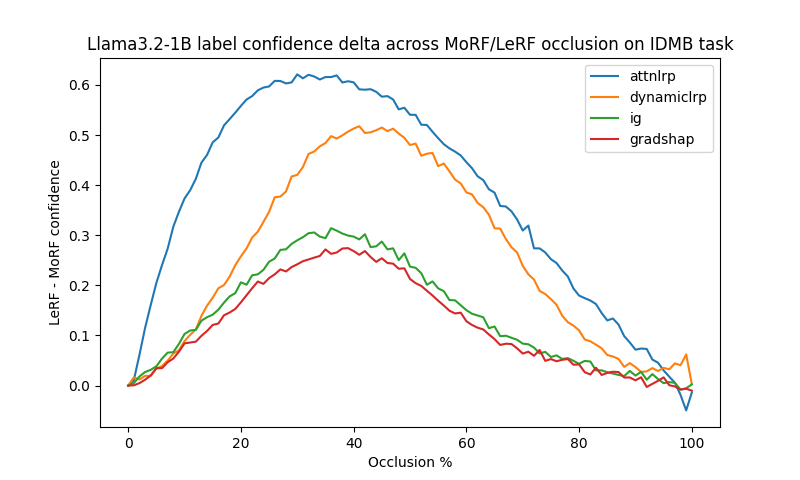}
        \caption{}
    \end{subfigure}
    \begin{subfigure}[b]{0.8\textwidth}
        \centering
        \includegraphics[width=1.0\linewidth]{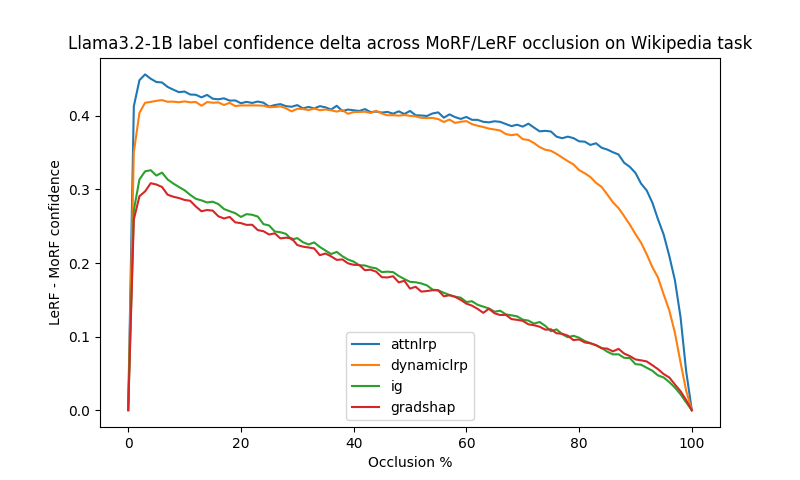}
        \caption{}
    \end{subfigure}
    \hfill
        \caption{Plotting the LeRF - MoRF difference in label confidence of Llama3.2-1B on IMDB classification in \textit{(a)} and Wikipedia next-token prediction in \textit{(b)}. A higher curve is better, signifying bigger gaps in prediction confidence when removing the features with highest vs. lowest importance.}
\label{fig:llama-plots}
\end{figure}
\begin{figure}[h]
  \centering
  \includegraphics[width=1.0\textwidth]{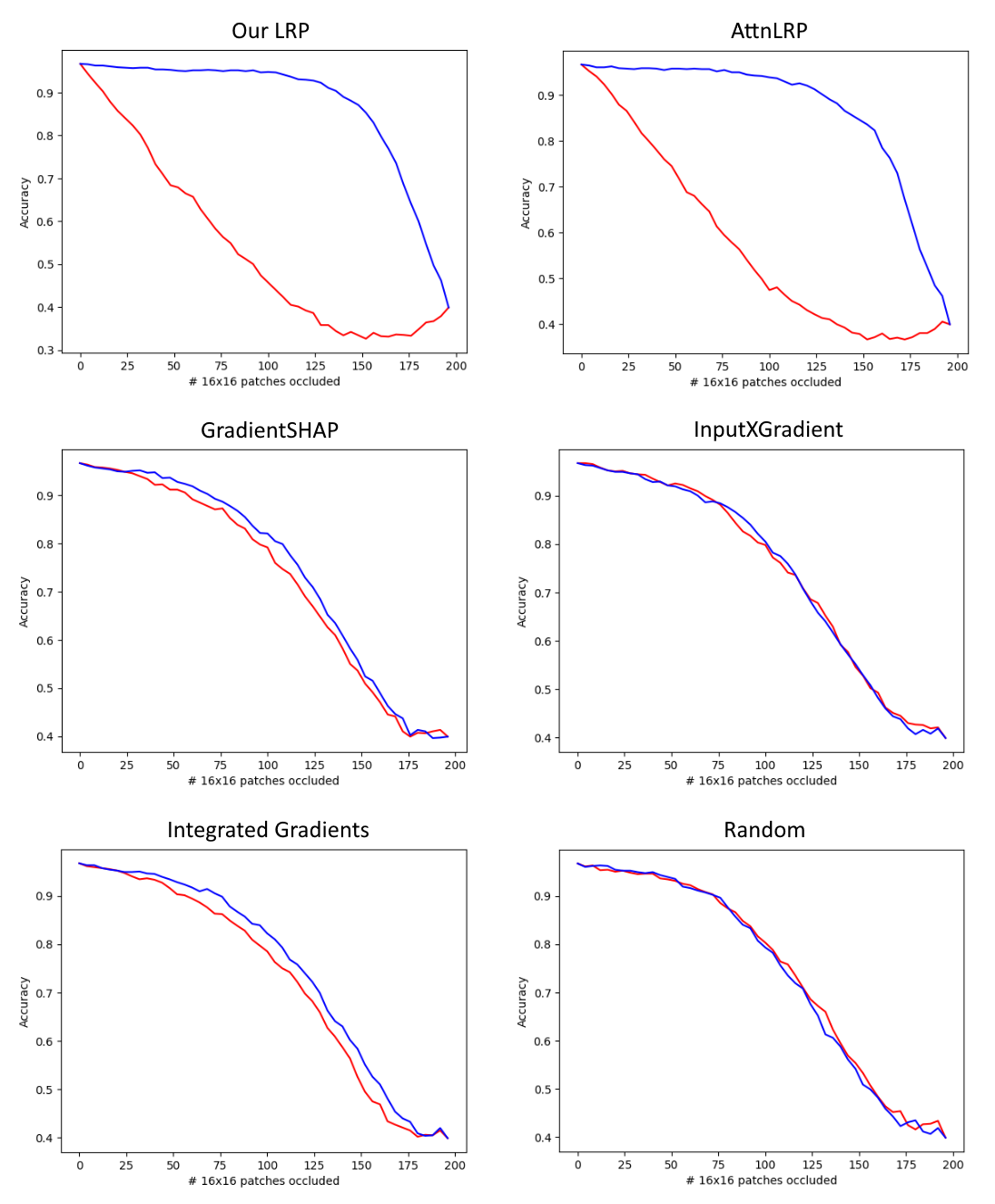}
  \caption{Comparing faithfulness of methods using ABPC from pretrained ViT-b-16 performance on 1000 examples from the ImageNet CIFAR-10 test set. Occlusion is applied by replacing 16x16 patches with the corresponding regions from a Gaussian-blurred version of the image (kernel size = 51, $\sigma$ = 20). Red = MoRF curve, Blue = LeRF curve.}
\end{figure}
\begin{table}[!h]
\scriptsize
\begin{center}
  \caption{DynamicLRP operation coverage breakdown of 15 models spanning vision, text, and audio modalities. \textbf{Unique Ops} indicates the number of distinct operation types in each model's computation graph.}
  \label{coverage-table}
  \begin{tabular}{ lcccc }
    \toprule
    \textbf{Model} & \textbf{Modality} & \textbf{Covered Nodes} & \textbf{Unique Ops covered} \\
    \midrule
    VGG16 \cite{simonyan2015vgg} & Vision & 90/90 & 9/9 \\
    ResNet-50 \cite{he2015resnet} & Vision & 339/339 & 10/10 \\
    ViT-b-16 \cite{dosovitskiy2021vit} & Vision & 779/779 & 16/16 \\
    EfficientNetv2-medium \cite{tan2021efficientnetv2} & Vision & 1,526/1,526 & 12/12 \\
    SigLIP-2-So400m-base14-384 \cite{tschannen2025siglip2} & Vision & 2,178/2,178 & 19/19 \\
    Wav2Vec2-xls-r-300m \cite{babu2021xlsr} & Audio & 2,021/2,022 & 18/19 \\
    whisper-small \cite{radford2022whisper} & Audio & 2,729/2,729 & 22/22 \\
    GPT-2 \cite{radford2019gpt2} & Language & 885/885 & 19/19 \\
    RoBERTa-large \cite{liu2019roberta} & Language & 1,461/1,461 & 14/14 \\
    Llama3.2-1B \cite{dubey2024llama3} & Language & 1,787/1,787 & 24/24 \\
    Gemma-3-270m-it \cite{gemmateam2025gemma3} & Language & 2,482/2,482 & 24/24 \\
    Qwen3-0.6B \cite{yang2025qwen3} & Language & 2,590/2,590 & 18/18 \\
    Flan-T5-large \cite{chung2022flant5} & Language & 5,713/5,713 & 24/24 \\
    DePlot \cite{liu2022deplot} & Vision/Language & 2,863/2,863 & 23/23 \\
    Mamba-130m \cite{gu2024mamba} & State Space & 3,421/3,421 & 26/26 \\
    \midrule
    \textbf{Aggregate} & - & \textbf{31,464/31,465} & - \\
    \bottomrule
  \end{tabular}
  \vspace{5pt}
  \caption{Uncovered operation breakdown, where applicable.}
  \begin{tabular}{| c | c  c |}
    \hline
    {Model} & {Uncovered Ops} & {Node count} \\
    \hline
    Wav2Vec2-xls-r-300m & WeightNormInterface & 1 \\
    \hline
  \end{tabular}
\end{center}
\end{table}

%\subsection{Parameter Randomization Sanity Checks}
%To verify that DynamicLRP attributions depend on the learned model parameters rather than solely on the input structure, we perform the parameter randomization sanity check proposed by \citep{adebayo2020sanitycheckssaliencymaps}. The method progressively randomizes model parameters and measures the divergence of the resulting attributions from those produced by the trained model. Due to time and compute limitations during the revision period, We have only performed this sanity check for the ViT/CIFAR10 evaluation. Starting from the trained model $M_0$, transformer layers were progressively randomized from the output layer to the input layer, producing model variants $M_i$ with the last $i$ layers randomized. For each of the 100 evaluation samples $x$, let $attr_i(x)$ be the attribution produced with model $M_i$. We then computed $\rho_{x, i} = Spearman(attr_0(x), attr_i(x))$, and take the mean over all $x$. The desired deviation from the trained model attributions as the model layers become more randomized can be observed in Table~\ref{param-randomization}.

\begin{table}[!h]
\begin{center}
\scriptsize
  \caption{Parameter randomization sanity check, as outlined in \citet{adebayo2020sanitycheckssaliencymaps}, for DynamicLRP ViT-B-16 attributions on 100 CIFAR10 image classification examples. Layer randomization is done from output to input layer. For each sample, the Spearman correlation ($\rho$) was taken between the attribution produced by the trained model $M_0$ and each randomized model variant $M_i$, then the average over all 100 samples was taken to evaluate attribution deterioration throughout the randomization process.}
  \label{param-randomization}
  \begin{tabular}{ lc }
    \toprule
    \textbf{Layers Randomized ($i$)} & \textbf{Mean $\rho_{0,i}$} \\
    \midrule
    1 & 0.479 \\ 
    2 & 0.297 \\ 
    3 & 0.127 \\ 
    4 & 0.0239 \\ 
    5 & 0.0391 \\ 
    6 & 0.0310 \\ 
    7 & 0.0414 \\ 
    8 & 0.0214 \\ 
    9 & 0.0204 \\ 
    10 & 0.0260 \\ 
    11 & 0.00678 \\ 
    12 & 0.00785 \\ 
    
    \bottomrule
  \end{tabular}
\end{center}
\end{table}

\end{document}